\documentclass[11pt]{article}

\usepackage[margin=1in]{geometry}

\usepackage[round,authoryear]{natbib}

\usepackage[
    colorlinks=true,
    citecolor=blue,
    linkcolor=blue,
    urlcolor=blue
]{hyperref}
\usepackage{booktabs}
\usepackage{tabularx}
\usepackage{array}
\usepackage{graphicx}
\usepackage{float}
\usepackage{amsmath}
\usepackage{amssymb}
\usepackage{amsthm}
\usepackage{bm}
\usepackage[normalem]{ulem}
\usepackage{graphicx}
\usepackage{algorithm}
\usepackage{algpseudocode}
\theoremstyle{definition}
\newtheorem{example}{Example}[section]
\newtheorem{theorem}{Theorem}[section]
\newcommand{\noi}{\noindent}
\newcommand{\bmu}{\mbox{\boldmath $\mu$}}
\newcommand{\bsigma}{\mbox{\boldmath $\sigma$}}
\newcommand{\btheta}{\mbox{\boldmath $\theta$}}
\newcommand{\bvartheta}{\mbox{\boldmath $\vartheta$}}

\newcommand{\bsvartheta}{\mbox{\boldmath \small $\vartheta$}}
\newcommand{\bdelta}{\mbox{\boldmath $\delta$}}
\newcommand{\brho}{\mbox{\boldmath $\rho$}}
\newcommand{\balpha}{\mbox{\boldmath $\alpha$}}
\newcommand{\bkappa}{\mbox{\boldmath $\kappa$}}

\newcommand{\bSigma}{\mbox{\boldmath $\Sigma$}}

\begin{document}

\title{QD-PINN:A Physics-Informed Neural Network Approach to Bayesian Quickest Detection in High-Dimensional Diffusions}

\author{
Hang Ye
\qquad
Yi Liu
\qquad
Haipeng Xing\thanks{Corresponding author: \texttt{haipeng.xing@stonybrook.edu}}
\\[0.6em]
\small Department of Applied Mathematics and Statistics, Stony Brook University
}

\date{}

\maketitle

\begin{abstract}
Quickest detection problems (QDPs) arise in a wide range of applications. However, the complementarity 
structure of the associated variational inequalities poses substantial challenges for traditional numerical methods, 
particularly in high-dimensional settings. We propose a PINN-based framework for estimating the optimal stopping boundary 
in QDPs with high-dimensional diffusion processes and spatial dependence, 
while overcoming the curse of dimensionality. Theoretically, we establish a novel equivalence between the free-boundary PDE 
and an infinite-dimensional stochastic optimization problem, rigorously justifying the use of finite-dimensional neural networks 
to approximate the true solution. Furthermore, we assess the robustness of the proposed framework by quantifying 
uncertainty in the learned stopping boundaries using deep ensembles and Monte Carlo dropout. The results show 
low variability across independent training runs and stochastic forward passes, indicating stable and consistent 
boundary estimates. We also compare our PINN results with reference numerical solutions where available and find
close agreement.
\end{abstract}

\section{Introduction}
\label{sec:introduction}

Sequential change-point detection, or quickest detection, aims 
to identify distribution shifts in observed stochastic processes 
as quickly as possible while controlling false alarms. Driven by 
the growth of high-dimensional, high-frequency, and spatially 
correlated streaming data, this task is vital across diverse 
domains, including financial anomaly detection, cybersecurity, 
industrial monitoring, healthcare, social media, and the Internet 
of Things (IoT) \citep{CohenZhao2015, tasoulis2018, 
Leevy2020, Xie2021}. However, extending classical 
univariate techniques to multivariate streams remains 
challenging. When observations are highly dimensional 
and high-frequency, real-time inference becomes difficult. 
This challenge is further exacerbated by intricate spatial 
correlations and cross-channel dependencies, which 
increase computational costs and make traditional 
methods impractical.

In the Bayesian formulation pioneered by Shiryaev 
\citep{Shiryaev1963, Shiryaev1964, Shiryaev1967}, 
quickest detection is cast as an optimal stopping 
problem. For diffusion processes, the strong Markov 
property yields an analytical characterization through 
a partial differential equation (PDE) with a free boundary 
\citep{Poor1998, Beibel2000, Gapeev2006, Peskir2006, 
Bayraktar2005, Gapeev2013}. Spatial correlation in 
high-dimensional data is explicitly captured by the 
cross-derivative terms of the diffusion operator. The 
free boundary partitions the state space into 
continuation and stopping regions. Consequently, 
the value function and optimal stopping boundary 
must be determined jointly, subject to the governing 
PDE, boundary conditions, and the value-matching, 
smooth-fit, and complementarity constraints of the 
associated variational inequality.

However, solving high-dimensional free-boundary 
problems (FBPs) with dense spatial coupling is 
difficult using conventional numerical methods. 
Grid-based schemes suffer from the 
curse of dimensionality---particularly when 
cross-derivatives prevent equation decoupling---while 
standard Monte Carlo methods cannot directly resolve 
the geometry of unknown stopping boundaries. 
This limitation motivates mesh-free paradigms 
that simultaneously approximate the value function 
and free boundary in high-dimensional state spaces. 
Physics-informed neural networks (PINNs) offer a 
natural framework for this challenge 
\citep{Raissi2019, Raissi2020, Wang2021Stefan}. 
By representing unknown functions with neural 
networks and enforcing governing equations and 
boundary conditions through customized loss 
functions, their mesh-free formulation is particularly 
effective when both the solution and free boundary are unknown.

We develop a PINN framework for sequential 
change-point detection in high-dimensional diffusion 
processes by formulating the task as a 
continuous-time optimal stopping problem 
governed by a high-dimensional FBP. 
Working directly with the original state variables, we 
use the Kolmogorov transformation 
\citep{Kolmogorov1992} to derive a smooth-fit 
condition without requiring an additional reduced 
state representation. We then reformulate this 
FBP as an infinite-dimensional 
stochastic optimization problem, approximating 
its solution via two neural networks that jointly 
estimate the value function and the stopping 
boundary. The network training is guided by a 
comprehensive loss function enforcing the 
governing PDE, boundary conditions, value-matching, 
smooth-fit, and complementarity constraints, while 
specialized high-dimensional sampling strategies are 
introduced to significantly improve approximation 
accuracy near the free boundary.
We demonstrate the effectiveness of our approach by 
estimating optimal stopping boundaries in change-point 
detection problems for high-dimensional Brownian 
motion and Ornstein--Uhlenbeck processes. 
Experiments consider diffusion dimensions up to 
$1000$ and several spatial correlation structures, 
such as tridiagonal, and Toeplitz correlation matrices. 
We also analyze sources of 
uncertainty in our PINN method and propose 
approaches for quantifying this uncertainty.

Our specific contributions are summarized as follows:
(i) We establish a rigorous theoretical equivalence 
between high-dimensional quickest detection and 
deep learning approximation. Bridged by an infinite-dimensional 
stochastic optimization framework, we provides 
mathematical justification for using finite-dimensional 
neural network architectures to approximate 
continuous-time optimal stopping solutions.
(ii) We develop a dual-network PINN framework that 
jointly approximates the value function and 
stopping boundary. A unified loss function embeds 
the governing PDE, boundary conditions, 
value-matching, smooth-fit, and complementarity 
constraints. The architecture also employs 
high-dimensional sampling to promote accurate 
convergence near the stopping boundary.

The remainder of the paper is organized as follows. 
Section~\ref{sec:related_work} reviews related work 
on deep learning for optimal stopping and FBP. 
Section~\ref{sec:seq_cp_detect} presents the 
Bayesian formulation of sequential change-point 
detection and its optimal stopping representation. 
Section~\ref{sec:method} develops the proposed 
PINN methodology and assesses uncertainty in 
the resulting solutions. Section~\ref{sec:experiments} 
presents the numerical experiments, and 
Section~\ref{sec:conclusion} concludes the paper.

\section{Literature Review}
\label{sec:related_work}
Deep-learning approaches to optimal stopping can be broadly divided into simulation-based and PDE-based methods. Simulation-based approaches learn stopping decisions or continuation values directly from simulated trajectories. For example, \citet{becker2019deep} introduced \emph{deep optimal stopping}, which learns stopping strategies from Monte Carlo samples and extends to continuous-time problems through time discretization. Recently, \citet{herrera2024optimal} developed randomized neural-network methods that train only the output layer, reducing optimization to a linear regression that scales well to high-dimensional applications.
A complementary paradigm uses PINNs to solve FBPs directly without relying on sample paths. \citet{Wang2021Stefan} developed a multi-network PINN framework for Stefan problems by jointly approximating the PDE solution and moving boundary. For optimal stopping, \citet{Gatta2023} proposed a meshless PINN method for multi-asset American options that jointly approximates the value function and exercise boundary while enforcing PDE and free-boundary constraints.

Our framework follows this PDE-based paradigm but focuses on the high-dimensional FBP arising in quickest change-point detection. While sharing the dual-network structure of existing moving-boundary methods, our approach introduces several key distinctions. We employ the Kolmogorov transformation to enforce smooth-fit conditions directly in the original state space without dimension reduction, incorporate strict complementarity constraints into a unified loss objective, and quantify uncertainty in the PINN approximations. We further introduce high-dimensional sampling strategies, demonstrating empirical scalability to dimensions well beyond those addressed by existing meshless PDE methods.

\section{Sequential change-point detection}
\label{sec:seq_cp_detect}

This section presents the sequential change-point 
detection problem for the drift coefficient of an 
observable diffusion process over an infinite horizon. 
Formulated within a Bayesian framework, this detection 
task is recast as a continuous-time optimal stopping 
problem with free-boundary constraints.

\subsection{Bayesian formulation}
\label{subsec:bayesian_formulation}

Consider the Bayesian quickest detection problem for an 
$n$-dimensional Wiener process. Let \(\theta\) be a 
nonnegative random variable such that $P_{\pi}
(\theta=0)=\pi$ and $P_{\pi}(\theta>t\mid\theta>0)
=e^{-\lambda t}$ for all $t>0$ and some \(\lambda>0\). 
Let \(\widetilde{\bf w}=\{\widetilde{\bf w}_t
\}_{t\ge0}\) be an \(n\)-dimensional Wiener process with 
correlation matrix \(\boldsymbol\rho\), starting from 
\(\mathbf 0\), and assume that \(\theta\) and 
\(\widetilde{\bf w}\) are independent. For 
convenience, represent the correlated Wiener process 
as $\widetilde{\bf w}_t=\brho^{1/2} {\bf w}_t$, 
where ${\bf w}_t$ is an  \(n\)-dimensional standard 
Wiener process and $\brho^{1/2}$ is an \(n\times n\) 
upper-triangular matrix satisfying $\brho=\brho^{1/2} 
(\brho^{1/2})'$, with $(\brho^{1/2})'$ denoting the 
transpose of \(\boldsymbol\rho^{1/2}\).
Suppose that one observes a $n$-dimensional 
continuous process $\{{ \bf x}_t\}$ solving the SDE
\begin{equation}\label{eq:cp_sde}
d{\bf x}_t = \left\{ \begin{array}{ll}
\bmu_0 ({\bf x}_t) dt + \bsigma({\bf x}_t)
d\widetilde{\bf w}_t, & t \in (0, \theta], \\
\bmu_1({\bf x}_t) dt + \bsigma( {\bf x}_t) 
d\widetilde{\bf w}_t, & t \in (\theta, \infty),
\end{array}\right.
\end{equation}

\noi with ${\bf x}_0 \in \mathbb{R}^n$,
where $\bmu_i: \mathbb{R}^n \to \mathbb{R}^n$,
$i=0,1,$ and $\bsigma: \mathbb{R}^n \to
\mathbb{R}^{n \times n}$ are 
continuously differentiable functions.
Let $\bdelta (\mathbf{x}) = \bmu_1({\bf x})
-\bmu_0(\mathbf{x})$, $\widetilde{\bsigma}
(\mathbf{x}) = \bsigma(\mathbf{x}) \brho^{1/2}$,
and $\bSigma(\mathbf{x}):=
\widetilde{\bsigma}(\mathbf{x})
\widetilde{\bsigma}'(\mathbf{x})$.
We assume that, for all ${\bf x} \in \mathbb{R}^n$, 
$\bSigma(\mathbf{x})$ are positive definite
$n\times n$ matrices, and there exist some fixed $\varrho_1,
\varrho_2>0$ such that,
$\varrho(\mathbf{x}) := \bdelta (\mathbf{x})'
\boldsymbol{\Sigma}^{-1}(\mathbf{x})
 \bdelta (\mathbf{x}) \in (0,\varrho_1)$ and
$\left\| \bmu_i(\mathbf{x}) \right\| +
\left\| \widetilde{\bsigma}(\mathbf{x}) \right\|
\leq \varrho_2 \left( 1+\|\mathbf{x}\| \right)$.
Then it follows from \cite[][Theorem 4.6]{Liptser2001}
that \eqref{eq:cp_sde} admits a unique strong 
solution under $\theta=s$ for some fixed $s$. 
Therefore, $P_{\pi}( {\bf x}\in\cdot | \theta=s)
= P^s ( {\bf x}\in\cdot )$ is the 
distribution law of a time-homogeneous diffusion 
process started at some $\mathbf{x}$,
with covariance matrix $\bsigma({\bf x})$
and the drift rate changing from $\bmu_0({\bf x})$
to $\bmu_1({\bf x})$ at time $s\in[0,\infty)$.
Then the probability measure $P_{\pi}$ has the structure
$P_{\pi} = \pi P^0 + (1-\pi) \int_0^\infty P^s 
\lambda e^{-\lambda s}ds$ for any fixed $\pi\in[0,1)$.
Given continuous observations of ${\bf x}_t$, we seek 
an optimal stopping time \(\tau\) that minimizes the 
Bayesian risk
\begin{equation}\label{eq:bayesian_risk}
\inf_{\tau} \big( P_{\pi}(\tau<\theta)
+ E_{\pi} \big[ F(\tau-\theta)\mathbf{1}_{\{\tau\geq\theta\}}
\big] \big).
\end{equation}
\noi Here, the first term represents the false-alarm 
probability, while the second is the expected delay 
cost. The delay penalty function satisfies \(F(t)\geq0\) 
for \(t\geq0\) and \(F(t)=0\) for \(t\leq0\).

Let $\pi_t = P_{\pi}( \theta\leq t | \mathcal{F}_t^{
\bf x})$ be the posterior probability process with 
$P_{\pi}(\pi_0=\pi)=1$. By Bayes' theorem,
$$
\begin{aligned}
&\pi_t = \pi \frac{dP^0}{dP_{\pi}} (t, {\bf x}) + (1-\pi)
\int_0^t \frac{dP^s}{dP_{\pi}}(t, {\bf x})
\lambda e^{-\lambda s}ds, \\
&1-\pi_t = (1-\pi)e^{-\lambda t}
\frac{dP^t}{dP_{\pi}}(t, {\bf x}),
\end{aligned}
$$

\noi  where $dP^s/dP_{\pi}$ denotes the Radon--Nikodym 
derivative of $(P^s | \mathcal{F}_t^{\bf x})$ with respect 
to $(P_\pi | \mathcal{F}_t^{\bf x})$. Let $L_t = \exp 
\big( \int_0^t \bdelta({\bf x}_s)' 
\bSigma^{-1}({\bf x}_s) \big[ d\mathbf{x}_s 
-\frac{1}{2} (\bmu_1({\bf x}_s) + \bmu_0({\bf x}_s) )ds
\big] \big)$ be the likelihood ratio at $t$.
Define the innovation 
process $\overline{\bf w} = \{\overline{\bf w}_t
\}_{t\geq 0} $ such that $d\overline{\bf w}_t =
\widetilde{\bsigma}^{-1}({\bf x}_t) [  d {\bf x}_t-
\left( \bmu_0({\bf x}_t) + \pi_t \bdelta({\bf x}_t) 
\right)dt ]$, which is a standard Wiener 
process under the measure $P_{\pi}$ with respect 
to the filtration $\{\mathcal{F}_t\}_{t\geq 0}$.
Then the process $({\bf x}_t, \pi_t)$ is a unique solution 
to the following SDEs:
\begin{subequations}
\begin{equation}\label{eq:posterior_x}
d {\bf x}_t = \left[ \bmu_0({\bf x}_t) + \pi_t 
\bdelta( {\bf x}_t) \right]dt + \widetilde{\bsigma} ({\bf x}_t)
d\overline{\bf w}_t, 
\end{equation}
\begin{equation}\label{eq:posterior_pi}
d\pi_t = \lambda (1-\pi_t) dt + \pi_t (1-\pi_t)
\bdelta({\bf x}_t)' \widetilde{\bsigma}^{-1}
({\bf x}_t) d\overline{\bf w}_t.
\end{equation}
\end{subequations}

\subsection{Optimal stopping and free-boundary
}\label{subsec:opt_stop}

To solve the Bayesian risk minimization problem 
\eqref{eq:bayesian_risk}, 
we consider a more general setup. 
Suppose that $r$ is the discount rate and 
$F(\pi_s, {\bf x}_s)$ is the running cost at time $s$. 
Assume that $F(\pi_s, {\bf x}_s)$ is bounded and 
continuously differentiable. Then given ${\bf x}_0={\bf x}$, 
the total loss due to a cost of the observations 
prior to stopping time $\tau$ is expressed as
$E_{\pi, {\bf x}} \big( \int_0^\tau e^{-rs}
F(\pi_s, {\bf x}_s) ds \big)$, where 
$P_{\pi,\mathbf{x}}$ is a measure of the diffusion process 
$\{(\pi_t, {\bf x}_t)\}$ started at
$(\pi,\mathbf{x}) \in (0,1)\times\mathbb{R}^n$
and solving SDEs \eqref{eq:posterior_x} 
and \eqref{eq:posterior_pi}.
After stopping the observations at time $\tau$, 
the loss due to a false alarm at time $\tau$ is given by
$G(\pi_\tau, {\bf x}_\tau )$. 
Given that ${\bf x}_0={\bf x}$ and $\pi_0=\pi$, 
consider the following optimal stopping problem
\begin{equation}
\inf_{\tau} E_{\pi, {\bf x}} \big(
\int_0^\tau e^{-rs} F(\pi_s, {\bf x}_s)ds +
e^{-r\tau}G(\pi_\tau, {\bf x}_{\tau} ) \big),
\label{eq:general_optimal_stopping}
\end{equation}

\noi  where $\tau$ is a stopping time of the Markov process 
$\{(\pi_t,{\bf x}_t)\}$.
Setting \(r=0\), \(F(\pi,\mathbf{x})=c\pi\), and
$G(\pi, {\bf x})=1-\pi$ yields the 
classical Bayesian risk $P_{\pi}(\tau<\theta)+
cE_{\pi}(\tau-\theta)^+$, which combines the 
probability of false alarm with the expected 
delay cost, as in the sequential 
change-point literature \citep{Shiryaev1967}.

Let $V(\pi, {\bf x})$ be the value function of the optimal 
stopping problem \eqref{eq:general_optimal_stopping}.
Then one can show that $V(\pi, {\bf x})$ satisfies
the following variational inequality,
\begin{equation}\label{eq:variational_inequality}
\begin{aligned}
& \min \left\{ \mathcal{L}_{\pi, {\bf x}} V(\pi,\mathbf{x}) - 
rV(\pi,\mathbf{x}) + F(\pi,\mathbf{x}), \right. \\
& \qquad \qquad \left. G(\pi,\mathbf{x}) - V(\pi,\mathbf{x}) 
\right\} = 0,
\end{aligned}
\end{equation}

\noi in which the operator $\mathcal{L}_{\pi, {\bf x}}$
is defined as
$$
\begin{aligned}
& \mathcal{L}_{\pi, {\bf x}} = \frac{\varrho({\bf x})}{2}
\pi^2(1-\pi)^2 \frac{\partial^2}{\partial\pi^2} + \pi(1-\pi)
\bdelta' \frac{\partial^2} {\partial\pi\, 
\partial\mathbf{x}}\\
& + \lambda(1-\pi) \frac{\partial}{\partial\pi}
+\frac{1}{2} \operatorname{tr} \Big(
\bSigma \frac{\partial^2}
{\partial\mathbf{x}\,\partial\mathbf{x}'}
\Big) + \left( \bmu_0 + \pi \bdelta
\right)' \frac{\partial}{\partial {\bf x}},
\end{aligned}
$$

\noi Following the arguments in \citet[Section 5.2]{Pham2009}, 
$V(\pi, {\bf x})$ is the unique viscosity solution to 
\eqref{eq:variational_inequality}. 
Making use of optimal stopping theory, the optimal 
stopping time in \eqref{eq:general_optimal_stopping} 
is given by 
$$
\tau_* = \inf \left\{ t\geq 0 | V(\pi_t, {\bf x}_t) =
G(\pi_t, {\bf x}_t) \right\}$$ 
and the continuation region 
has the form 
$$
\mathcal{C} = \left\{ (\pi, {\bf x}) \in (0,1)\times
\mathbb{R}^n \mid V(\pi,\mathbf{x})<G(\pi, {\bf x}) \right\}.
$$
The theory further implies that there exists a 
function $S({\bf x}) \in (0, 1)$ such that the continuation 
region $\mathcal{C}$ takes the form
$\mathcal{C} = \left\{ (\pi,{\bf x}) \in (0,1)\times
\mathbb{R}^n \mid \pi < S({\bf x}) \right\}$,
the stopping region is the closure of the set
$\mathcal{S} = \left\{ (\pi, {\bf x}) \in (0,1)\times
\mathbb{R}^n \mid \pi>S(\mathbf{x}) \right\}$,
and the optimal decision rule in  
\eqref{eq:general_optimal_stopping} has the structure
$\tau_* = \inf \left\{ t\geq 0 \mid \pi_t\geq S({\bf x}_t) 
\right\}$. According to \citet[Section IV.8]{Peskir2006}, 
the variational inequality \eqref{eq:variational_inequality} 
can be represented by its associated FBP,
\begin{subequations}\label{eq:fb_system}
\begin{eqnarray} 
&& \mathcal{L}_{\pi, {\bf x}} V(\pi, {\bf x}) - rV(\pi,{\bf x})
+ F(\pi, {\bf x}) = 0 \mbox{ on } \mathcal{C}, \qquad
\label{eq:fb_pde}\\
&& V(\pi, {\bf x}) \big |_{\pi=S({\bf x})^-} = G(\pi, {\bf x})
\big |_{\pi=S({\bf x})^-}, \label{eq:fb_value_matching} \\
&& V(\pi, {\bf x}) = G(\pi, {\bf x} ) \quad \mbox{on }
\mathcal{S}, \label{eq:fb_stopping}\\
&& V(\pi,\mathbf{x}) < G(\pi,\mathbf{x}) \quad 
\mbox{on } \mathcal{C}. \label{eq:fb_inequality}
\end{eqnarray}

\noi The FBP \eqref{eq:fb_pde}--\eqref{eq:fb_inequality} 
admits multiple solutions. To find a unique solution
of the problem, we use the Kolmogorov transformation 
to obtain the following smooth fit condition
\citep{Kolmogorov1992},
\begin{equation}\label{eq:fb_smfit}
\frac{\partial V}{\partial\pi} + \frac{ \bdelta({\bf x})'
\big( \partial V/\partial {\bf x} \big)}{\pi(1-\pi)
\varrho(\mathbf{x})} \Big|_{\pi=S({\bf x})^-}
= \frac{\partial G}{\partial\pi}
\Big|_{\pi=S({\bf x})^-}.
\end{equation}
\end{subequations}

We then need to solve the FBP
\eqref{eq:fb_pde}--\eqref{eq:fb_smfit} such that the 
resulting boundary is continuous and of bounded variation.

\section{Physical Informed Neural Network}
\label{sec:method}

This section first establishes the equivalence between the 
FBP \eqref{eq:fb_pde}--\eqref{eq:fb_smfit} and an 
infinite-dimensional stochastic optimization framework, 
and then proposes a dual-network PINN architecture 
driven by a unified loss function that embeds the 
governing PDE, boundary conditions, value-matching, 
smooth-fit, and complementarity constraints. Finally, it 
analyzes the sources of approximation uncertainty.

\subsection{Infinite dimensional stochastic optimization}

Let $(\Omega,\mathcal{P},\mathbb{P})$ be a probability 
space. Let $\mathcal{I}:\Omega\to[0,1]$ and
$\mathcal{X}:\Omega\to\mathbb{R}^n$
be independent random variables. Define the 
following residuals corresponding to equations
\eqref{eq:fb_pde}, \eqref{eq:fb_value_matching},
\eqref{eq:fb_stopping}, and \eqref{eq:fb_smfit},
respectively,
\begin{align*}
& \mathcal{R}_{\cal C}(v,s) = \mathcal{L}v(\mathcal{I},
\mathcal{X}) - rv(\mathcal{I},\mathcal{X})
+ F(\mathcal{I},\mathcal{X}), \\
& \mathcal{R}_{\cal S}(v,s) = v(\mathcal{I},\mathcal{X})
-G(\mathcal{I},\mathcal{X}), \\
& \mathcal{R}_{\partial {\cal S}, 1}(v,s) = v
\bigl(s(\mathcal{X}),\mathcal{X}\bigr)-
G\bigl(s(\mathcal{X}),\mathcal{X}\bigr), \\
& \mathcal{R}_{\partial {\cal S}, 2}(v,s) = v_{\pi}
\bigl(s(\mathcal{X}),\mathcal{X}\bigr) +
\frac{\delta(\mathcal{X})' (\partial v/\partial {\bf x})
\bigl(s(\mathcal{X}),\mathcal{X}\bigr)
 }{ s(\mathcal{X}) \bigl(1-s(\mathcal{X})\bigr)
\varrho(\mathcal{X}) } -G_{\pi}\bigl(
s(\mathcal{X}),\mathcal{X}\bigr).
\end{align*}

\noi Equation \eqref{eq:variational_inequality} defines 
the complementarity structure of the FBP. Because 
the continuation and stopping regions are unknown 
a priori, optimizing this structure directly is highly 
challenging for neural networks. Without explicit 
regularization, the networks can easily exploit the 
variational inequality and converge to degenerate 
solutions where almost the entire domain is 
erroneously assigned to just one region, 
minimizing the loss by satisfying only a single 
operational regime. To prevent this behavior and 
ensure the model accurately learns the boundary 
separating the two regions, we introduce the 
following residuals that strictly enforces 
the variational inequality,
\begin{align*}
\widetilde{\mathcal{R}}_{\cal C}(v,s) &=
\max\bigl(-\mathcal{R}_{\cal C}(v,s),0\bigr),\\
\widetilde{\mathcal{R}}_{\cal S}(v,s) &=
\min\bigl(-\mathcal{R}_{\cal S}(v,s),0\bigr), \\
\widetilde{\mathcal{R}}_{{\cal C}, {\cal S}}
(v,s) &= \Phi\bigl(\mathcal{R}_{\cal C}(v,s),
-{\mathcal{R}}_{\cal S}(v,s)\bigr),
\end{align*}

\noi where $\Phi(a,b)=a+b-\sqrt{a^2+b^2}$ is the 
Fischer-Burmeister function. An important property 
of the Fischer-Bermeister function is that $\Phi(a,b)=0$ 
if and only if $a\geq 0$, $b\geq 0$, and $ab=0$.
Then
for any $v(\pi,\mathbf{x})$ $\in$ $C^2([0,1]\times
\mathbb{R}^n,\mathbb{R})$ and $s(\mathbf{x}) \in
C^1(\mathbb{R}^n,[0,1])$, we define the loss functional
$\mathbb{F}: C^2([0,1]\times \mathbb{R}^n,\mathbb{R})
\times C^1(\mathbb{R}^n,[0,1]) \longrightarrow
[0,\infty]$ as follows:
\begin{equation}\label{eq:loss_functional}
\begin{aligned}
 \mathbb{F}(v,s) &= \mathbb{E} \Bigl\{ {\cal R}_{\cal C
}^{2} \mathbf{1}_{\{({\cal I}, {\cal X})\in {\cal C}\}} 
+ \mathcal{R}_{\cal S}^{2} {\bf 1}_{\{({\cal I},
{\cal X}) \in {\cal S}\}} 
+ \big( {\cal R}_{\partial {\cal S}, 1}^{2} +
{\cal R}_{\partial {\cal S}, 2}^{2} \big)
+ \widetilde{\cal R}_{\cal C}^{2} {\bf 1}_{\{(
{\cal I}, {\cal X})\in {\cal C}\}} \\
& \qquad \qquad +\widetilde{\mathcal{R}}_{S}^{2} \mathbf{1}_{
\{(\mathcal{I},\mathcal{X})\in\mathcal{S}\}} +
\widetilde{\cal R}_{{\cal C}, {\cal S} }^{2} \Bigr\}.
\end{aligned}
\end{equation}

\begin{theorem}\label{thm.inf.sto.opt}
Let $n\ge 1$. Suppose that $G, F\in C^2( \mathbb{R}^{n+1}, 
\mathbb{R})$ and let $v \in C^2(\mathbb{R}^{n+1},\mathbb{R})$,
$S\in C^1(\mathbb{R}^n,[0,1])$. 
Assume that there exist strictly positive probability
densities $\widetilde{\pi}\in L^1([0,1])$,
$\widetilde{x}\in L^1(\mathbb{R}^n)$
such that, for every $A\in\mathcal{B}([0,1])$ and
$B\in\mathcal{B}(\mathbb{R}^n)$,
$\mathbb{P}(\mathcal{I}\in A) =
\int_A \widetilde{\pi}(\pi)\,d\pi$,
$\mathbb{P}(\mathcal{X}\in B)=
\int_B \widetilde{x}(x)\,dx$.
Assume that all terms appearing in the residuals are 
well-defined and square integrable.
Then the following two statements are equivalent:
\begin{enumerate}
\item [(i)] $\mathbb{F}(V,S)=0 =\inf_{\substack{
v\in C^2([0,1]\times\mathbb{R}^n,\mathbb{R})\\
s\in C^1(\mathbb{R}^n,[0,1])}}\mathbb{F}(v,s)$.

\item [(ii)] $(V,S)$ is a classical solution of the 
FBP \eqref{eq:fb_pde}--\eqref{eq:fb_smfit}.
\end{enumerate}
\end{theorem}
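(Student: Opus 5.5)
The argument is the standard one for PINN-type loss functionals (as in Sirignano--Spiliopoulos and the moving-boundary extensions). The loss $\mathbb{F}$ is a sum of nonnegative expectations, so $\mathbb{F}\ge 0$ everywhere and $\inf\mathbb{F}\ge 0$. Statement (i) is therefore equivalent to the single condition $\mathbb{F}(V,S)=0$, and the proof reduces to showing that $\mathbb{F}(V,S)=0$ holds if and only if every residual vanishes identically on its natural domain. Throughout, $\mathcal{C}$ and $\mathcal{S}$ are the regions determined by $S$, namely $\{\pi<S(\mathbf{x})\}$ and $\{\pi>S(\mathbf{x})\}$.

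\emph{(ii)$\Rightarrow$(i).} If $(V,S)$ is a classical solution, then pointwise:
\begin{itemize}
\item $\mathcal{R}_{\mathcal{C}}=0$ on $\mathcal{C}$ by \eqref{eq:fb_pde}, and $\mathcal{R}_{\mathcal{S}}=0$ on $\mathcal{S}$ by \eqref{eq:fb_stopping};
\item $\mathcal{R}_{\partial\mathcal{S},1}=\mathcal{R}_{\partial\mathcal{S},2}=0$ for every $\mathbf{x}$, by \eqref{eq:fb_value_matching} and \eqref{eq:fb_smfit} together with continuity of $V$, $\partial V$, $G$ and $S$;
\item the max/min terms $\widetilde{\mathcal{R}}_{\mathcal{C}}$ and $\widetilde{\mathcal{R}}_{\mathcal{S}}$ vanish because their arguments are zero on the relevant regions.
\end{itemize}
For the Fischer--Burmeister term we need $\mathcal{R}_{\mathcal{C}}\ge0$, $G-V\ge0$ and $\mathcal{R}_{\mathcal{C}}(G-V)=0$ everywhere. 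On $\mathcal{C}$ we have $\mathcal{R}_{\mathcal{C}}=0$ and $G-V>0$ by \eqref{eq:fb_inequality}. On $\mathcal{S}$ we have $G=V$, but we also need $\mathcal{R}_{\mathcal{C}}=\mathcal{L}G-rG+F\ge0$ there. This is part of the variational inequality \eqref{eq:variational_inequality}, so I would read ``classical solution of the FBP'' as including it, since the FBP represents \eqref{eq:variational_inequality} per \citet[Section IV.8]{Peskir2006}. The boundary $\{\pi=S(\mathbf{x})\}$ is a graph over $\mathbf{x}$, so it is Lebesgue-null in $[0,1]\times\mathbb{R}^n$ and contributes nothing to the expectations. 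Hence every integrand is zero a.s., $\mathbb{F}(V,S)=0$, and (i) follows from nonnegativity.

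\emph{(i)$\Rightarrow$(ii).} Suppose $\mathbb{F}(V,S)=0$. Each nonnegative term has zero expectation, so each integrand vanishes $\mathbb{P}$-a.s. Because $\mathcal{I}$ and $\mathcal{X}$ are independent with strictly positive densities, the law of $(\mathcal{I},\mathcal{X})$ is equivalent to Lebesgue measure on $[0,1]\times\mathbb{R}^n$. Therefore:
\begin{itemize}
\item $\mathcal{R}_{\mathcal{C}}=0$ Lebesgue-a.e.\ on the open set $\mathcal{C}$ (open since $S$ is continuous);
\item $V=G$ a.e.\ on $\mathcal{S}$;
\item the boundary residuals vanish for a.e.\ $\mathbf{x}$;
\item $\Phi(\mathcal{R}_{\mathcal{C}},G-V)=0$ a.e.
\end{itemize}
Every residual is continuous, because $V,F,G\in C^2$, $S\in C^1$, and $\bmu_i$, $\bsigma$ are $C^1$, with $s(1-s)\varrho>0$ where the smooth-fit residual is defined. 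A continuous function vanishing a.e.\ on an open set vanishes everywhere on it, and this extends to the closure by continuity. This recovers \eqref{eq:fb_pde}, \eqref{eq:fb_stopping}, \eqref{eq:fb_value_matching} and \eqref{eq:fb_smfit} pointwise. The property of $\Phi$ then yields $\mathcal{R}_{\mathcal{C}}\ge0$, $G-V\ge0$ and $\mathcal{R}_{\mathcal{C}}(G-V)=0$ everywhere, which is the variational inequality.

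The main obstacle is the strict inequality \eqref{eq:fb_inequality}. The Fischer--Burmeister residual gives only $G-V\ge0$ on $\mathcal{C}$, and zero loss does not rule out contact points $V=G$ inside $\mathcal{C}$. I would first try a strong maximum principle argument. On $\mathcal{C}$, $w=G-V\ge0$ satisfies $\mathcal{L}w-rw=\mathcal{L}G-rG+F$, using $\mathcal{L}V-rV+F=0$. If the source term has a sign, an interior zero of $w$ is excluded. If it does not, I would fall back on reading $\mathcal{C}$ in the theorem as the region defined by the learned $S$ and interpreting \eqref{eq:fb_inequality} in the non-strict sense appropriate to the variational inequality. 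I would state this interpretation explicitly, along with the null-boundary and continuity details, which are routine.
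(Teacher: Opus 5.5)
Your proof follows the paper's proof essentially step for step. Nonnegativity of $\mathbb{F}$ reduces (i) to $\mathbb{F}(V,S)=0$. In (ii)$\Rightarrow$(i) the paper, like you, gets $\mathcal{R}_{\mathcal C}\ge 0$ on the stopping region (needed for the Fischer--Burmeister term) only by appealing to the variational inequality \eqref{eq:variational_inequality}, i.e.\ by the same reading of ``classical solution'' that you state explicitly. In (i)$\Rightarrow$(ii) it uses the vanishing of each nonnegative integrand, strict positivity of the densities, and continuity. The paper invokes continuity only for the two boundary residuals, so you are if anything more careful there.

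The point you leave open is exactly where the paper's proof breaks. It goes from $\mathcal{R}_{\mathcal S}\le 0$ straight to ``\eqref{eq:fb_inequality} holds'', which only gives $V\le G$ on $\mathcal C$. Your worry is justified, and no maximum-principle argument can remove it, because the implication (i)$\Rightarrow$\eqref{eq:fb_inequality} is false in the generality of the theorem. Take $G(\pi,\mathbf{x})=1-\pi$, $F(\pi,\mathbf{x})=(\lambda+r)(1-\pi)$, $V=G$ and $S\equiv 1/2$. Then $\mathcal{L}G-rG+F\equiv 0$, so $\mathcal{R}_{\mathcal C}$, $\mathcal{R}_{\mathcal S}$ and $\mathcal{R}_{\partial\mathcal S,1}$ vanish identically. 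Also $\mathcal{R}_{\partial\mathcal S,2}=0$ because $\partial G/\partial\mathbf{x}=0$, and every penalty term vanishes since $\Phi(0,0)=0$. Hence $\mathbb{F}(V,S)=0$, yet $V=G$ on all of $\mathcal C=\{\pi<1/2\}$. Since $S$ was arbitrary, the zero-loss pair is not unique either.

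In this example the source $\mathcal{L}G-rG+F$ for $w=G-V$ is identically zero, so ``the source has a sign'' is not enough. Only a strictly negative source excludes interior zeros of $w$; this follows pointwise, since at such a zero $\nabla w=0$ and the Hessian is positive semidefinite, giving $\mathcal{L}w-rw\ge 0$. A merely nonpositive source yields, via a strong minimum principle, at best $w\equiv 0$ on a component of $\mathcal C$, which is exactly what happens here. So drop the maximum-principle attempt and commit to your fallback. With \eqref{eq:fb_inequality} read as $V\le G$, your argument is complete, and that weaker statement is all the paper's proof actually establishes.
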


Under the above formulation, the value function $V(\pi,\mathbf{x})$
and the stopping boundary $S(\mathbf{x})$ solve the 
infinite-dimensional stochastic optimization problem
in Theorem \ref{thm.inf.sto.opt}(i). That is, $\bigl(V(\pi,\mathbf{x}),
S(\mathbf{x})\bigr)$ is the unique global minimizer 
of the functional $(v,s) \longmapsto \mathbb{F}(v,s)
\in [0,\infty]$.
A deep learning-based approximation method replaces 
the function spaces $C^2([0,1]\times\mathbb{R}^n,
\mathbb{R})$ and $C^1(\mathbb{R}^n,[0,1])$
by classes of deep neural networks with a fixed 
and sufficiently large architecture. This converts 
the infinite-dimensional optimization problem
in Theorem \ref{thm.inf.sto.opt}(i) into a finite-dimensional
optimization problem over the parameters of the 
neural networks for $V(\pi,\mathbf{x})$ and
$S(\mathbf{x})$.

\subsection{Neural network approximation}
\label{sec:neural_network_approximation}
We approximate the value function $V(\pi, {\bf x})$ 
and the unknown stopping boundary $S({\bf x})$ 
using two separate deep multilayer feedforward 
neural networks (FNNs),denoted as 
$\widehat{V}(\pi, {\bf x}; \btheta_V)$ and 
$\widehat{S}({\bf x}; \btheta_{S})$ with parameter 
sets $\btheta_V$ and $\btheta_{S}$, respectively.
(see Figure \ref{app:dual_architecture} in Appendix)
Each network approximates its target function 
through a composition of layer-wise operations of the form
$g_K \circ g_{K-1} \circ \cdots \circ g_l \circ \cdots 
\circ g_1$. . Specifically, the output of the $l$-th hidden 
layer is given by $g_l( {\bf y}) = \phi_l ({\bf W}_l {\bf y} + 
{\bf b}_l)$, where ${\bf W}_l$ and ${\bf b}_l$ represent 
the weights matrices and bias vectors, and 
$\phi_l(\cdot)$ denotes a element-wise nonlinear 
activation function. The complete collection of trainable 
parameters for each model is thus defined as $\btheta
:= \{ {\bf W}_l, {\bf b}_l \}_{l=1}^K$. For convenience,
we denote $\bvartheta = \{ \btheta_V, \btheta_S\}$

\medskip
{\bf Sampling strategy}.
To improve training efficiency in high dimensions, we 
employ a residual-based adaptive sampling strategy. 
Inspired by the active sampling principle 
in~\cite{gao2023active}, this approach is tailored 
to the free-boundary structure of the quickest 
detection problem. We first generate candidate 
spatial points using Sobol sequences, sampling 
corresponding posterior probabilities from a 
mixture distribution that combines uniform
samples on $(0,1)$ with samples concentrated near 
the current estimated boundary: $\pi_i=S_{\psi}(x_i)
+\xi_i$, $\xi_i\sim\mathcal{N}(0,\sigma_\pi^2)$.
Each candidate $(\pi_i,x_i)$ is assigned a composite 
residual score $\eta(\pi_i,x_i)$ and resampled 
with probability $P_i=\frac{(\eta(\pi_i,x_i)+\varepsilon)^2}
{\sum_{j=1}^{N_{\mathrm{cand}}}
(\eta(\pi_j,x_j)+\varepsilon)^2}$ to emphasize regions 
with larger approximation errors. 
These points are further refined via local residual-guided 
Metropolis steps in $(\pi,x)$ with acceptance probability
$\alpha= \min\left\{ 1, \frac{\eta(\pi',x')+\varepsilon}
{\eta(\pi,x)+\varepsilon} \right\}$.
Each training batch combines these adaptively refined 
points with uniform Sobol samples, balancing global 
exploration with increased resolution with high resolution 
near the evolving free boundary.

\medskip
{\bf PDE residual on the stopping boundary}.
In addition, we impose an extra constraint on the 
predicted free boundary. Because equation 
\eqref{eq:fb_pde} holds in the continuation region,
${\cal R}_{\cal C}(v, s)$ approaches zero as the 
state approaches the boundary $\partial {\cal D}$ from 
the continuation side. We therefore evaluate
the continuation-region PDE at the predicted boundary
$\bigl(s(\mathcal{X}),\mathcal{X}\bigr)$, defining the 
corresponding loss as
$\widetilde{\cal R}_{\partial {\cal S}, {\cal C} }(v,s)
=\mathcal{R}_{\cal C} \bigl(v, s(\mathcal{X}) \bigr)$.
This loss penalizes boundary predictions inconsistent 
with the PDE by providing a direct gradient signal to 
the approximation $s$ whenever the residual remains nonzero:
$\mathbb{F}_1(v,s) = \mathbb{F}(v,s) + 
\mathbb{E} \big( \widetilde{\cal R}_{\partial 
{\cal S}, {\cal C} }^2 \big)$.

\subsection{Uncertainty Quantification}
\label{sec:uncert_quant}
The PINN approximations, $\widehat{V}(\pi, {\bf x}; \btheta_V)$
and $\widehat{S}({\bf x}; \btheta_{S})$, are subject to 
several sources of uncertainty. 
We now discuss two uncertainty sources and quantify 
them via numerical experiments in Section \ref{sec:experiments}.

{\bf Parameter uncertainty}.
Let $\widehat{\bvartheta}=\{ \widehat{\btheta}_V,  
\widehat{\btheta}_S\}$  be the neural-network 
parameters obtained by minimizing the total loss function. 
For a fixed stochastic model, $\widehat{\bvartheta}$ 
remains subject to uncertainty arising from the finite 
number of collocation points, random initialization, 
stochastic optimization, network architecture, and 
sampling variability. To quantify this uncertainty, we 
use the deep ensemble approach 
of~\citet{lakshminarayanan2017simple}. Let 
$\widehat{\bvartheta}^{(1)},\dots,\widehat{\bvartheta}^{(M)}$ 
denote the parameters obtained from $M$ independent 
training runs with distinct random initializations and/or 
independently generated collocation samples. The 
corresponding approximations form the ensemble 
$\mathcal{D} = \big\{V_{\widehat{\bsvartheta}^{(m)}}, S_{\widehat{\bsvartheta}^{(m)}} \big\}_{m=1}^M$. 
Pointwise empirical uncertainty intervals for 
\(V(\pi,{\bf x})\) and \(S({\bf x})\) are then constructed 
from the \(\alpha/2\) and \(1-\alpha/2\) empirical 
quantiles of $\mathcal{D}$ to yield a \(100(1-\alpha)\%\) confidence band.

{\bf Neural-Network uncertainty}.
We additionally assess neural-network uncertainty 
using Monte Carlo dropout~\citep{gal2016dropout}, 
applying Bernoulli dropout after the hidden-layer 
activations of both networks. During training, masks 
are resampled at each iteration and held fixed across 
collocation points and repeated evaluations.Post-training, 
predictive variability is estimated by holding the 
learned parameters $\widehat{\bvartheta}$ fixed 
and drawing $M$ independent sets of dropout 
masks. Each set is shared across all evaluation 
and batched points, defining a pair of stochastic approximations 
$V_{\widehat{\bsvartheta}}^{(m)}$ and
$S_{\widehat{\bsvartheta}}^{(m)}$, $m=1,\ldots,M$.
Their pointwise sample means, variances, and 
empirical quantiles are computed as in the deep 
ensemble approach, describing variability across 
dropout realizations rather than independent training runs.

\section{NUMERICAL EXPERIMENTS}
\label{sec:experiments}

This section applies the methodology in Sections 2
and 3 to solve the Bayesian sequential change-point
detection problem \eqref{eq:bayesian_risk} for 
multi- or high-dimensional Wiener processes
and Ornstein-Uhlenbeck (OU) processes. For illustrative
purpose, we assume $F(\pi_s, {\bf x}_s) \equiv c>0$
and $G(\pi_s, {\bf x}_s) = 1-\pi_s$, 
so that the optimal stopping problem 
\eqref{eq:general_optimal_stopping} reduces to 
\begin{equation}\label{cpdetect.opt.stop3} 
\inf_{\tau} E_{\pi, {\bf x}} \Big( 
c \int_0^\tau e^{-rs} \pi_s ds + e^{-r\tau} (1-\pi_\tau) \Big). 
\end{equation}

\subsection{Brownian motion with a change in drift}
\label{subsec:BM_drift}

Consider the quickest detection problem for 
a Brownian motion with constant drift and volatility. 
Specifically, we let $\bmu_i({\bf x}) = \bmu_i \in 
\mathbb{R}^n$, $i=0,1$, $\bsigma({\bf x}) =
\bsigma \in \mathbb{R}^{n\times n}$, and 
$\brho ={\bf I}_n$ in the SDE \eqref{eq:cp_sde}.
This means that the process of ${\bf x}_t$ undergoes
a change from $\bmu_0$ to $\bmu_1$ at time $\theta$.
We solve the problem of quickest change 
detection reduces by addressing the FBP 
\eqref{eq:fb_pde}--\eqref{eq:fb_smfit}.
To evaluate the proposed PINN framework, we 
consider a numerical experiment that admits an 
exact analytical solution obtained by reducing the 
multi-dimensional problem to a one-dimensional form.

\begin{example}\label{exam:bm}
Consider the parameter setting with $\lambda=1$, 
$r=0$, $c=1$, $\bmu_0={\bf 0}$, $\bmu_1 =
\frac{1}{\sqrt{n}}\mathbf{1}_n$, $\bsigma={\bf I}_n$, 
and $\brho={\bf I}_n$. This implies that $\varrho
\equiv 1$. We use the proposed PINN to solve the
FBP \eqref{eq:fb_pde}--\eqref{eq:fb_smfit} for
$n\in\{1,5,50,100,500\}$.
The convergence of the training loss and the 
geometry of the estimated stopping boundaries
$\widehat{S}(\mathbf{x})$ are presented in 
Figure \ref{fig:Winner_process_results}. Since the 
state space is high-dimensional (${\bf x}=
(x^{(1)}, x^{(2)}, \dots, x^{(n)}) \in \mathbb{R}^n$), 
we visualize the estimated stopping 
boundary by plotting the surface $\widehat{S}({\bf x})$ 
as a function of the first two coordinates 
$x^{(1)}, x^{(2)} \in (-4,4)$, while holding the remaining 
components $x^{(3)}, \dots, x^{(n)}$ fixed.
\end{example}
\begin{figure}[ht] 
\centering
\includegraphics[scale=0.26]{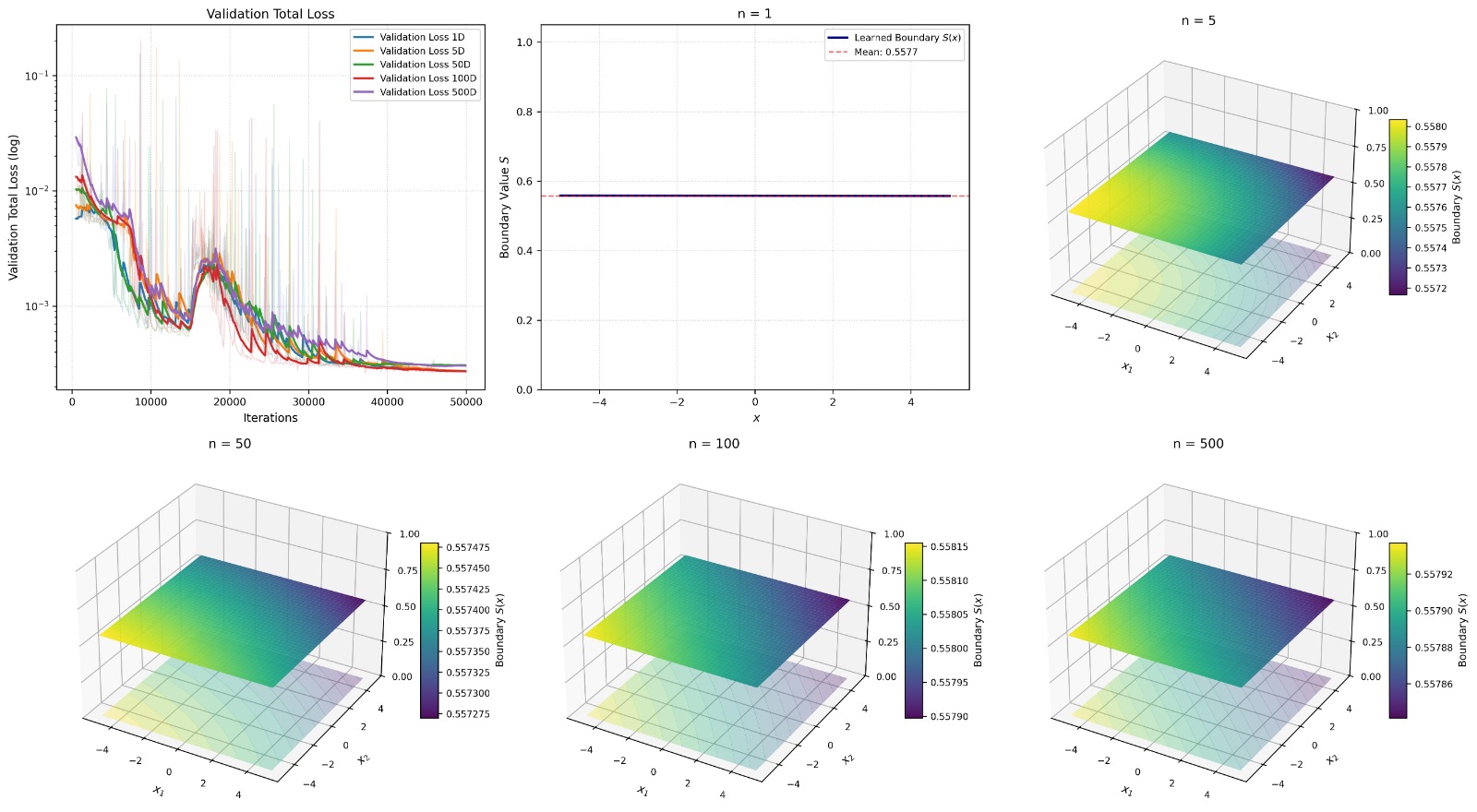}
\caption{Training loss (top-left) and $\widehat{S}(\mathbf{x})$ 
(remaining panels) in Example \ref{exam:bm}.}
 \label{fig:Winner_process_results}
\end{figure}

The optimal stopping boundary $S({\bf x})$ in
Example \ref{exam:bm} has an exact solution. 
Consider the transformation $y_t = \frac{1}{\sqrt{n}}
{\bf 1}_n' {\bf x}_t$, then the process $y_t$ follows
the SDE: $dy_t =dw_t$ for $t\le \theta$ and
$dy_t = dt + dw_t$ for $t > \theta$,
in which $w_t$ is the one-dimensional standard
Brownian motion starting from 0. When $r=0$, the
quickest change detection problem \eqref{cpdetect.opt.stop3} 
reduces to $V(\pi, {\bf x}) = V(\pi) = \inf_{\tau} E_{\pi} \big[
1-\pi_{\tau} + c\int_0^{\tau} \pi_s\,ds \big]$ for all
${\bf x}\in \mathbb{R}^n$. According to \cite{Shiryaev1965},
The continuation region $\mathcal{C}$ and the stopping 
region $\mathcal{S}$ are expressed as
$\mathcal{C} = \left\{ \pi\in(0,S) \right\}$,
$\mathcal{S} = \left\{ \pi\in(S,1) \right\}$,
where $S\in(0,1)$ denotes the unknown stopping boundary.
The associated one-dimensional FBP is given by
$$
\frac{\varrho}{2} \pi^2(1-\pi)^2 V''(\pi) +
\lambda(1-\pi)V'(\pi)+ c_0\pi =0,
$$
$$
V(\pi) < 1-\pi, \mbox{ } \pi\in\mathcal{C}, \qquad
V(\pi) = 1-\pi, \mbox{ } \pi\in\mathcal{S},
$$
$$
V(S) = 1-S, \qquad V'(S) = -1.
$$

\noi \cite{Shiryaev1965} provided an analytical solution 
to the above one-dimensional FBP. Specifically, 
the value function is given by
$V(\pi) = (1-S) + \int_S^{\pi} \psi(\xi) d\xi$
for $\pi\in(0,S)$, and $V(\pi) = 1-\pi$ for $\pi\in(S,1]$.
Here, 
$$
\psi(\pi) = -\frac{2c_0}{\varrho} e^{
-\frac{2\lambda}{\varrho}\alpha(\pi) }
\int_0^{\pi} \frac{ e^{\alpha(\xi)} }{
\xi(1-\xi)^2 } d\xi.
$$

\noi and $\alpha(\pi) = \log\big( \pi/(1-\pi) 
\big) -1/\pi$, 
The stopping boundary $S$ is the unique root of
$\psi(S)=-1$.

Under the parameter setting in Example \ref{exam:bm}, 
the optimal stopping boundary is $S(\mathbf{x})
\equiv 0.5560$ for all ${\bf x} \in \mathbb{R}^n$.
We use this exact solution as a benchmark 
to evaluate the PINN estimate $\widehat{S}({\bf x})$ 
in Example \ref{exam:bm}.
The PINN estimates remain in the range $0.557$--$0.558$,
compared with the exact solution $S(\mathbf{x})=0.5560$, 
the resulting absolute errors are on the order of $10^{-3}$.

\subsection{OU process with a change-point in mean 
reversion speed}\label{subsec:ou_process}

We now evaluate quickest detection where ${\bf x}_t$ 
follows a multivariate OU process with a change 
in mean-reversion speed. Setting $\bmu_i({\bf x}) = 
\bkappa_i(\balpha - {\bf x})$ in the SDE \eqref{eq:cp_sde}, 
the mean-reversion profile transitions from $\bkappa_0$ 
to $\bkappa_1$ at time \(\theta \). The matrices 
$\bkappa_i \in \mathbb{R}^{n\times n}$ capture 
both self-reverting tendencies (diagonal entries) and 
cross-reversion couplings (off-diagonal entries), while 
$\balpha \in \mathbb{R}^n$ is the long-term mean vector.
Dense spatial correlations and cross-channel 
dependencies inherently couple the components 
of a multivariate OU process, meaning that local 
structural changes dynamically diffuse throughout 
the system. This global propagation prevents 
decoupled tracking and makes timely change-point 
detection highly challenging.

To evaluate performance of our PINN framework 
under complex spatial and cross-channel 
dependencies, we analyze two configurations of 
the correlation matrix $\brho$ based on the parameter
$\rho$,

\begin{list}{\textbullet}{%
    \setlength{\topsep}{3pt}%
    \setlength{\partopsep}{0pt}%
    \setlength{\itemsep}{2pt}%
    \setlength{\parsep}{0pt}%
    \setlength{\parskip}{0pt}%
}
\item \textbf{Tridiagonal ($\brho_{\rm tri}$):}
$\rho_{ii}=1$, $\rho_{i,i+1}=\rho_{i+1,i}=\rho$,
and $\rho_{ij}=0$ otherwise.

\item \textbf{Toeplitz ($\brho_{\rm toe}$):}
$\rho_{ij}=\rho^{|i-j|}$ for $1\le i,j\le n$.
\end{list}

\begin{example}\label{exam:ou_kappa_tri}
Consider the parameter setting with 
$(\bkappa_0, \bkappa_1)=({\bf I}_n,2{\bf I}_n)$,
$\balpha={\bf 0}$, $\bsigma={\bf I}_n$,
$\lambda=1$, $r=0.05$, $c_0=1$,
$\brho=\brho_{\rm tri}$, and $\rho=0.1$.
We use the proposed PINN to solve the
FBP \eqref{eq:fb_pde}--\eqref{eq:fb_smfit} for
$n\in\{1,5,50,100,500\}$.
Figure \ref{fig:OU_kappa_tri} illustrates the training 
loss evolution alongside the estimated stopping 
boundaries  $\widehat{S}(\mathbf{x})$.
\end{example}

\begin{example}\label{exam:ou_kappa_toe}
Consider the same parameter setting as 
Example \ref{exam:ou_kappa_tri} except 
we use the Toeplitz correlation matrix
$\brho=\brho_{\rm toe}$ with $\rho=0.1$. 
We apply the proposed PINN framework to solve the FBP
\eqref{eq:fb_pde}--\eqref{eq:fb_smfit} across dimensions
$n\in\{1,5,50,100,500\}$; the resulting training trajectories 
and estimated stopping boundaries 
$\widehat{S}(\mathbf{x})$ are displayed in Figure
\ref{fig:OU_kappa_toe}.
\end{example}

\begin{figure}[ht]
\centering
\includegraphics[scale=0.26]{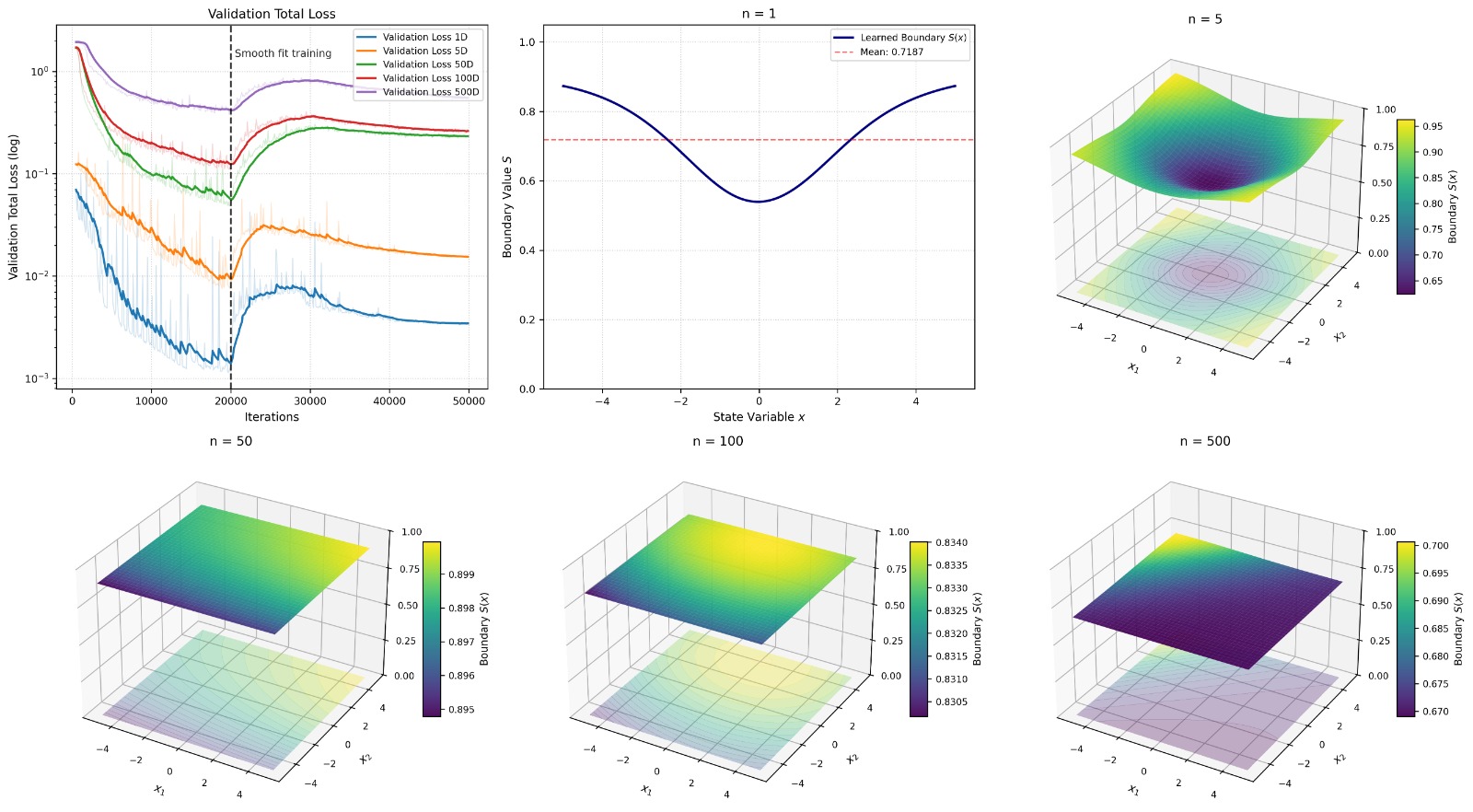}
\caption{Validation total loss (top-left) and $\widehat{S}({\bf x})$ 
(remaining panels) in Example \ref{exam:ou_kappa_tri}.}
\label{fig:OU_kappa_tri}
\end{figure}
\begin{figure}[ht]
\centering
\includegraphics[scale=0.26]{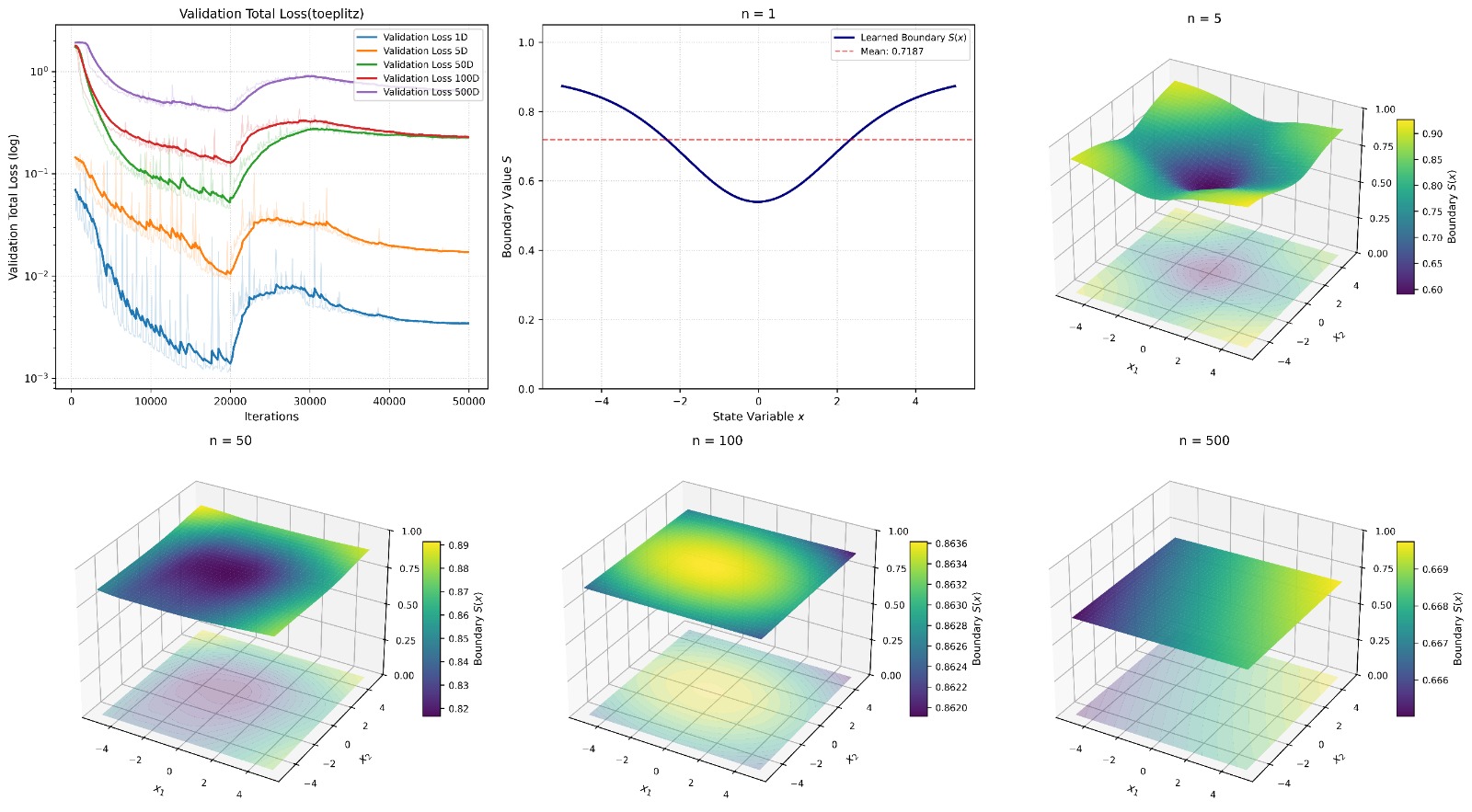}
\caption{Validation total loss (top-left) and $\widehat{S}({\bf x})$ 
(remaining panels) in Example \ref{exam:ou_kappa_toe}.}
\label{fig:OU_kappa_toe}
\end{figure}

Figures~\ref{fig:OU_kappa_tri} and~\ref{fig:OU_kappa_toe} 
illustrate the learned stopping boundaries for 
Examples~\ref{exam:ou_kappa_tri} and~\ref{exam:ou_kappa_toe}, 
respectively. For \(n=1\), both examples yield U-shaped 
boundary estimates, consistent with the symmetry 
of the underlying OU process. For \(n=5\), the learned 
boundary in Example~\ref{exam:ou_kappa_toe} 
exhibits a larger magnitude of variation than that in 
Example~\ref{exam:ou_kappa_tri}. For \(n \ge 100\), 
the displayed \((x_1,x_2)\) slices provide only partial 
views of the boundaries; however, their apparent 
flatness does not imply that the true stopping 
boundaries are state-independent. Because 
analytical benchmarks are unavailable for these 
cases, our assessment focuses on characterizing 
the topological structure of the learned boundaries.

\subsection{OU process with a change-point in 
long-term mean}\label{subsubsec:ou_change_mean}

We next consider the quickest detection problem in 
which the observation $\mathbf{x}_t$ follows 
multivariate OU processes 
with a change-point in the long-term mean. 
Specifically, we let $\bmu_i({\bf x}) = \bkappa
(\balpha_i - {\bf x})$ in the SDE \eqref{eq:cp_sde},
so that the long-term mean of the process undergoes
a change from $\balpha_0$ to $\balpha_1$ at $\theta$. 

\begin{example}\label{exam:ou_alpha}
For the long-term mean change experiment, we set
$\balpha_0=\mathbf{0}$,
$\balpha_1={\bf 1}_n/\sqrt{n}$,
$\bkappa=\bsigma={\bf I}_n$,
$\lambda=1$, $r=0.05$, $c_0=1$, and
$\brho=\brho_{\rm tri}$. The quickest change 
detection problem can be solved by addressing the 
FBP \eqref{eq:fb_pde}--\eqref{eq:fb_smfit}.
In Figure \ref{fig:OU_longterm_results}, the top-left 
panel shows the validation total loss, while the
remaining panels present the estimated stopping 
boundaries $S(\mathbf{x})$
for $n=1,5,50,100,$ and $500$.
\end{example}

\begin{figure}[ht]
\centering
\includegraphics[scale=0.26]{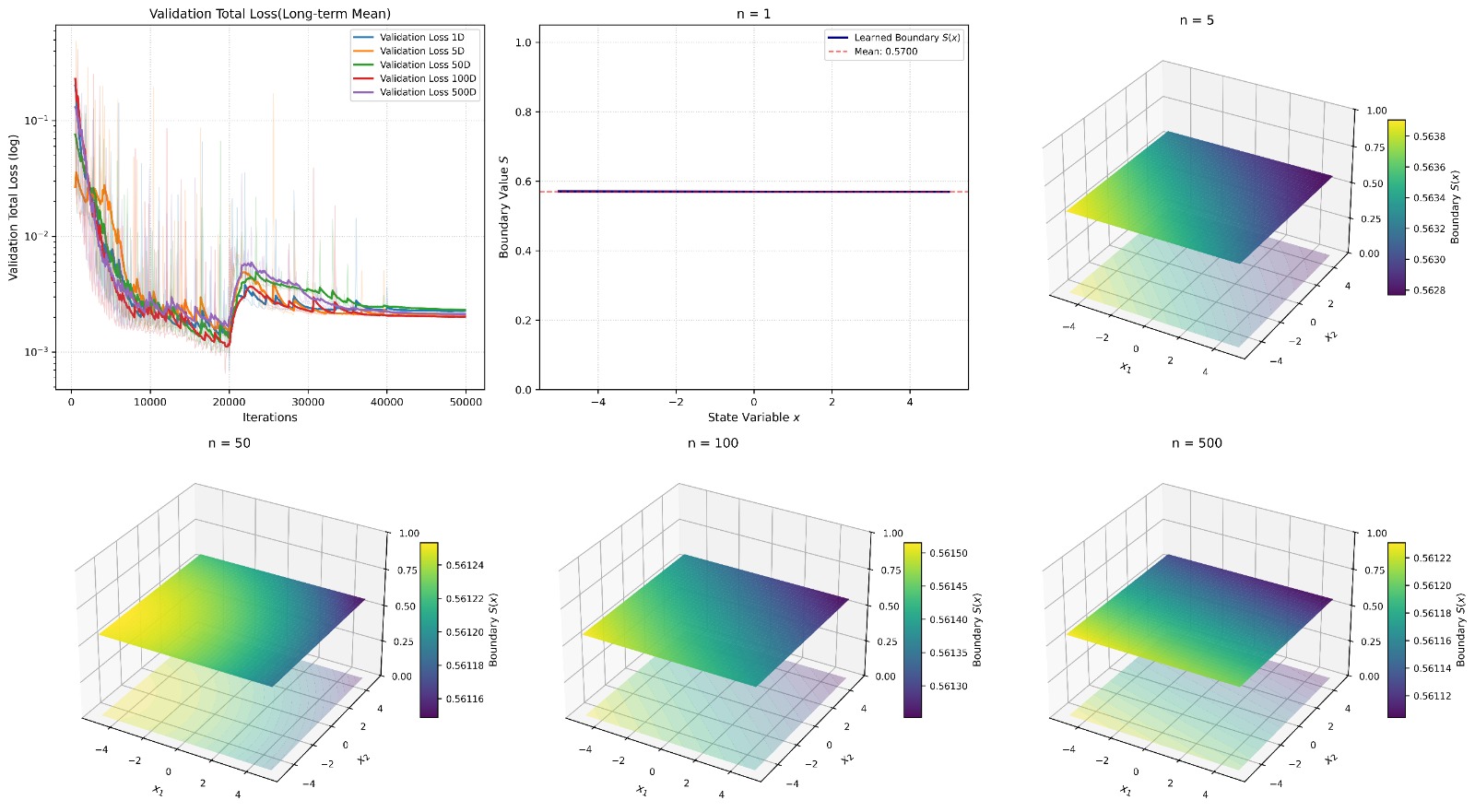}
\caption{Validation total loss (top-left) and $\widehat{S}({\bf x})$ 
(remaining panels) in Example \ref{exam:ou_alpha}.}
\label{fig:OU_longterm_results}
\end{figure}

Figure~\ref{fig:OU_longterm_results} shows the learned 
stopping boundaries for the OU model with a change in 
the long-term mean. As the problem in Example 
\ref{exam:ou_alpha} can be reformulated independently 
of $\mathbf{x}$, it implies a spatially constant 
stopping boundary for each fixed $n$. In particular, 
$\widehat{S}({\bf x})$ for $n=1$ is nearly constant at 
$0.5700$, while the displayed higher-dimensional 
slices exhibit only minor spatial variations around 
$0.563$ for $n=5$ and $0.561$ for $n=50,100,$ and $500$.The 
numerical reference is from $0.5604$ to $0.5690$ for $n=1$
to $n=500$. Comparisons with the PINN estimates are
provided in Appendix~\ref{app:ou_mean_reference}.

\subsection{Uncertainty Quantification and Error Analysis}
\label{sec:UQ}

\begin{figure}[ht]
    \centering
    \includegraphics[scale=0.5]{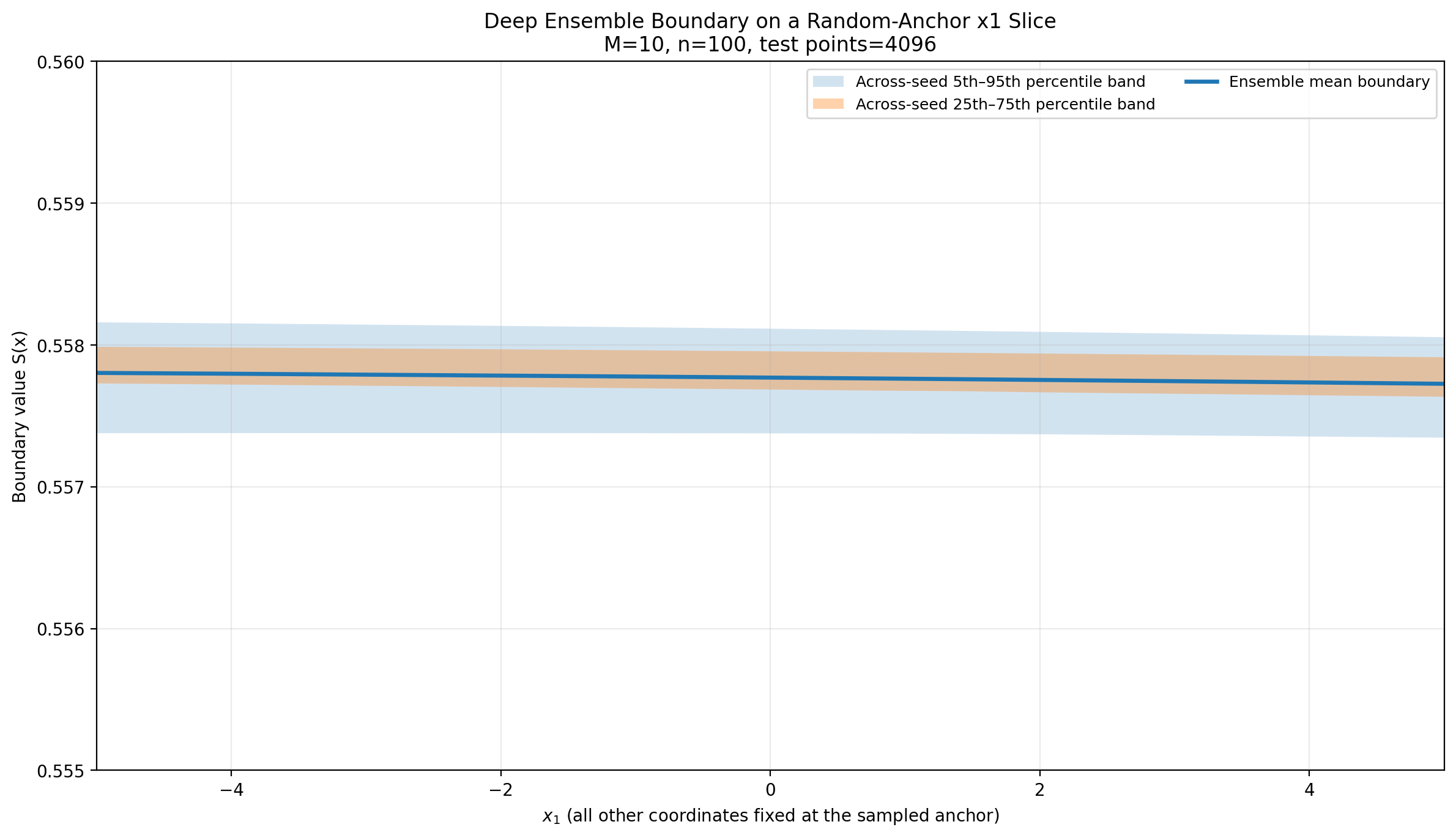}
    \par\vspace{0.5em}
    \includegraphics[scale=0.5]{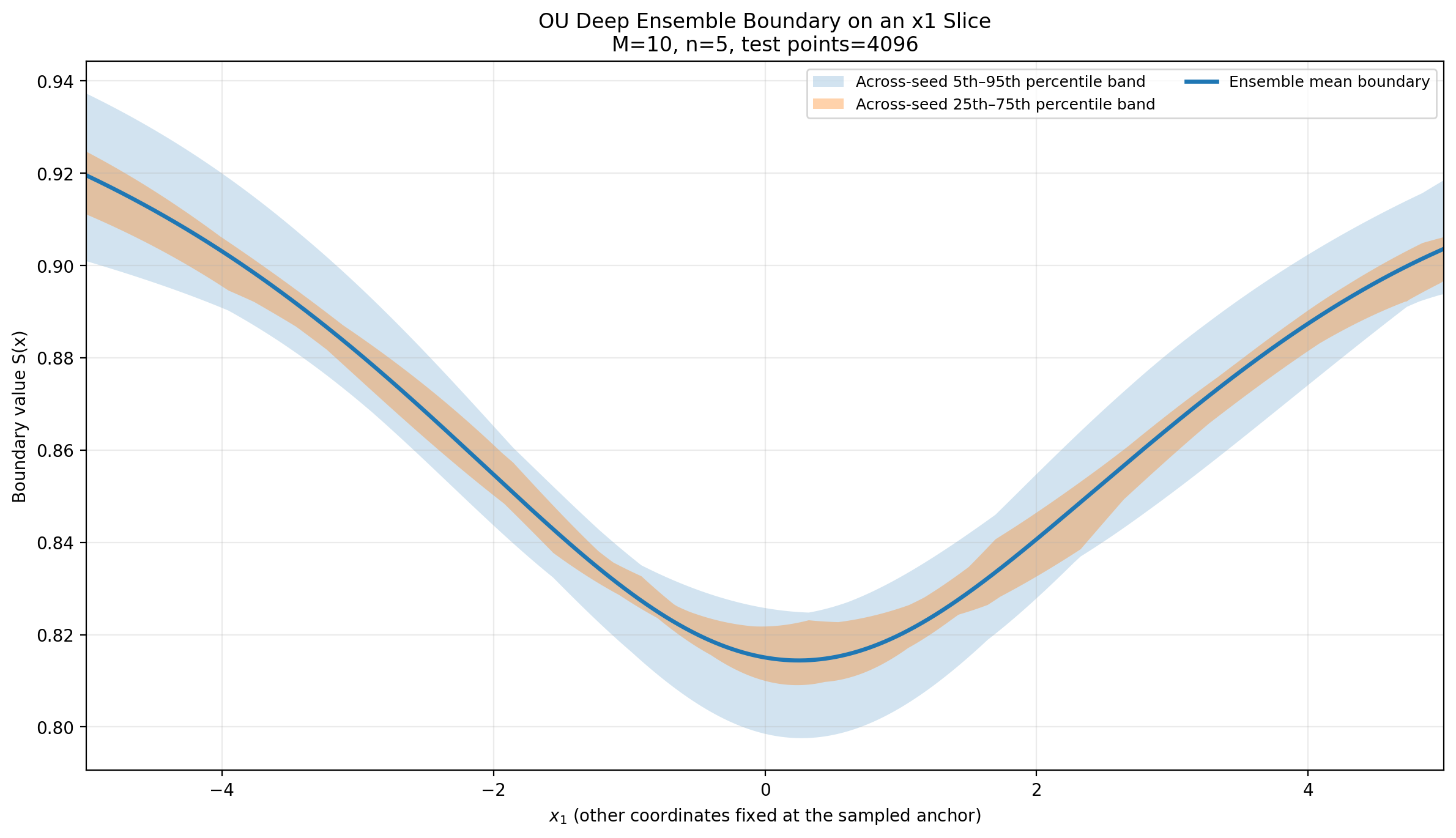}
    \caption{Uncertainty quantification results for the Wiener process
    (top) and the OU process (bottom).}
    \label{fig:combined_uq_results}
\end{figure}

{\bf Deep ensembles}.
We use $M=10$ independent training runs and $4096$ test points
for Example~\ref{exam:bm} ($n=100$) and
Example~\ref{exam:ou_kappa_toe} ($n=5$).
Figure~\ref{fig:combined_uq_results} displays the ensemble mean
boundaries for the Wiener process (top) and the OU process
(bottom) along an $x_1$ slice, with the remaining coordinates
fixed at a sampled anchor in each example.
The shaded regions represent the pointwise 5th--95th and
25th--75th percentile bands across ensemble members.
In the top panel, the ensemble mean remains nearly constant
at approximately $0.5578$, with a 5th--95th percentile band
narrower than $10^{-3}$ throughout the displayed slice.
In the bottom panel, the 5th--95th percentile band has a width
on the order of $10^{-2}$, indicating modest, state-dependent
variability across training runs.
These results indicate close agreement among independently
trained boundary estimates along the displayed slices.

\medskip
{\bf Monte Carlo dropout}.
For Example~\ref{exam:bm} ($n=1$), we perform $100$ stochastic
forward passes at each of $4096$ fixed test points, with a
dropout probability of $0.05$.
Table~\ref{tab:mc_dropout_uq} summarizes the mean, 95th percentile,
and maximum of the pointwise standard deviations for the
stopping boundary and value function.
The boundary estimates exhibit small fluctuations under
dropout, with a maximum pointwise standard deviation of
$7.12 \times 10^{-4}$.

\begin{table}[t]
\centering
\caption{Statistics of pointwise standard 
deviations in Monte Carlo dropout for 
Example \ref{exam:bm} ($n=1$).}
\label{tab:mc_dropout_uq}
\small
\setlength{\tabcolsep}{4pt}
\begin{tabular}{lccc}
\toprule
Quantity & Mean & 95th percentile & Maximum \\
\midrule
$\widehat{S}(\mathbf{x})$
& 0.000710 & 0.000712 & 0.000712 \\
$\widehat{V}(\pi,\mathbf{x})$
& 0.009886 & 0.013369 & 0.013690 \\
\bottomrule
\end{tabular}
\end{table}

\section{CONCLUSION}
\label{sec:conclusion}
This paper develops a PINN framework for Bayesian sequential change-point detection in high-dimensional diffusion processes. 
The Bayesian detection problem is formulated as a continuous-time optimal stopping problem and characterized by a high-dimensional free-boundary problem. 
We then reformulate the free-boundary problem as an infinite-dimensional stochastic optimization problem and approximate the value function and stopping 
boundary jointly using neural networks. The proposed loss function incorporates the governing PDE, boundary related conditions, and 
complementarity constraints, together with high-dimensional sampling strategies.

Numerical experiments for Brownian motion and OU processes demonstrate that the proposed method can accurately approximate benchmark stopping boundaries 
and remains applicable in dimensions up to $1,000$. The experiments also show that the framework can accommodate different changes in diffusion dynamics and 
spatial correlation structures. These results demonstrate the potential of PINNs as a mesh-free approach for high-dimensional free-boundary problems 
arising in continuous-time sequential analysis.

The framework suggests several directions for future research. One is to extend the methodology to other high-dimensional sequential analysis problems 
that admit optimal stopping and free-boundary formulations. Another is to develop more effective adaptive sampling and boundary-focused training strategies for 
high-dimensional state spaces. Finally, systematic uncertainty quantification and approximation-error analysis could provide further information on the reliability of 
the learned value functions and stopping boundaries.

\clearpage
\appendix
\thispagestyle{empty}
\onecolumn

\section{ADDITIONAL DERIVATIONS}


\subsection{Proof of Theorem \ref{thm.inf.sto.opt}}

\textbf{Proof that (ii) implies (i).}
Suppose that $(V,S)$ is a classical solution of the 
free-boundary problem.
In the continuation region $\mathcal{C}_S$, the PDE
\eqref{eq:fb_pde} gives $\mathcal{R}_{\mathcal C}(V,S)
(\pi,x)=0$, $(\pi,x)\in\mathcal{C}_S$.
Consequently, $\mathcal{R}_{\mathcal C}(V,S)^2 \
\mathbf{1}_{\{ (\mathcal{I},\mathcal{X})\in\mathcal{C}_S
\}} =0$ almost surely.
Moreover, the variational inequality gives
$\mathcal{R}_{\mathcal C}(V,S)\geq0$. Therefore,
$\widetilde{\mathcal R}_{\mathcal C}(V,S)=
\max\left\{-\mathcal{R}_{\mathcal C}(V,S),0\right\}
=0$. Similarly, in the stopping region $\mathcal{S}_S$, 
the value matching condition gives
$\mathcal{R}_{\mathcal S}(V,S)(\pi,x)=0$,
$(\pi,x)\in\mathcal{S}_S$. Hence
$\mathcal{R}_{\mathcal S}(V,S)^2\mathbf{1}_{\{
(\mathcal{I},\mathcal{X})\in\mathcal{S}_S \}} =0$
almost surely.
Since $\mathcal{R}_{\mathcal S}(V,S)\leq 0$,
we also have $\widetilde{\mathcal R}_{\mathcal S}(V,S)
= \max\left\{ \mathcal{R}_{\mathcal S}(V,S),0 \right\}
=0$.
The value-matching condition \eqref{eq:fb_value_matching} 
gives $V(S(x),x)=G(S(x),x)$, $x\in\mathbb{R}^n$.
Therefore, $\mathcal{R}_{\partial\mathcal S,1}(V,S)(x)
=0$ for every $x\in\mathbb{R}^n$.
Likewise, the smooth-fit condition
\eqref{eq:fb_smfit} gives
$\mathcal{R}_{\partial\mathcal S,2}(V,S)(x)=0$
for every $x\in\mathbb{R}^n$.
Finally, the variational inequality gives
$\mathcal{R}_{\mathcal C}(V,S)\geq0$,
$-\mathcal{R}_{\mathcal S}(V,S)\geq 0$,
and $\mathcal{R}_{\mathcal C}(V,S)
\left[-\mathcal{R}_{\mathcal S}(V,S) \right] =0$.
The Fischer--Burmeister property
therefore implies $\widetilde{\mathcal R}_{\mathcal C,
\mathcal S}(V,S) = \Phi\left( \mathcal{R}_{\mathcal C}(V,S),
-\mathcal{R}_{\mathcal S}(V,S) \right) =0$.
Thus every nonnegative term in the definition of 
the loss functional vanishes almost surely, and 
consequently $\mathbb{F}(V,S)=0$. Since
$\mathbb{F}(v,s)\geq 0$
for every admissible pair $(v,s)$, it follows that
$0\leq \inf_{v,s}\mathbb{F}(v,s) \leq \mathbb{F}(V,S)=0$.
Hence $\mathbb{F}(V,S) = \inf_{v,s}\mathbb{F}(v,s)
=0$. Therefore, (i) holds.

\medskip

\noindent
\textbf{Proof that (i) implies (ii).}
Suppose now that $\mathbb{F}(V,S)=0$. Since the 
loss functional is a sum of nonnegative terms, every
individual nonnegative contribution has expectation 
equal to zero.

We first consider the continuation-region residual.
Define $H_{\mathcal C}(\pi,x)=
\mathcal{R}_{\mathcal C}(V,S)(\pi,x)^2
\mathbf{1}_{\mathcal C_S}(\pi,x) +
\widetilde{\mathcal R}_{\mathcal C}(V,S)(\pi,x)^2$.
Since $H_{\mathcal C}\geq 0$ and $\mathbb{E}\left[
H_{\mathcal C}(\mathcal I,\mathcal X)
\right]=0$, it implies that
$H_{\mathcal C}(\pi,x)=0$ on the continuation region. 
In particular, $\mathcal{R}_{\mathcal C}(V,S)(\pi,x)=0$,
$(\pi,x)\in\mathcal{C}_S$. Hence the PDE 
\eqref{eq:fb_pde} holds in the continuation region.
Next, define $H_{\mathcal S}(\pi,x)
=\mathcal{R}_{\mathcal S}(V,S)(\pi,x)^2
\mathbf{1}_{\mathcal S_S}(\pi,x) +
\widetilde{\mathcal R}_{\mathcal S}(V,S)(\pi,x)^2$.
Again, $H_{\mathcal S}\geq 0$ and
$\mathbb{E}\left[ H_{\mathcal S}(\mathcal I,\mathcal X)
\right]=0$. It yields
$\mathcal{R}_{\mathcal S}(V,S)(\pi,x)=0$
throughout the stopping region. Hence,
the equation \eqref{eq:fb_stopping} holds.

We next recover the variational inequalities. Since
$\mathbb{E}\big[ \widetilde{\mathcal R}_{\mathcal C}
(V,S)^2 \big]=0$, it gives
$\widetilde{\mathcal R}_{\mathcal C}(V,S)(\pi,x)=0$
for every $(\pi,x)$. Hence $\max\left\{
-\mathcal{R}_{\mathcal C}(V,S),0
\right\}=0$, which implies
$\mathcal{R}_{\mathcal C}(V,S)\geq 0$. Similarly,
$\mathbb{E}\big[
\widetilde{\mathcal R}_{\mathcal S}(V,S)^2
\big]=0$ implies
$\widetilde{\mathcal R}_{\mathcal S}(V,S)(\pi,x)=0$.
Consequently, $\mathcal{R}_{\mathcal S}(V,S)\leq 0$,
and equation \eqref{eq:fb_inequality} holds.
Furthermore, $\mathbb{E}\big[\widetilde{\mathcal 
R}_{\mathcal C,\mathcal S}(V,S)^2\big]=0$ implies
$\Phi\left( \mathcal{R}_{\mathcal C}(V,S),
-\mathcal{R}_{\mathcal S}(V,S) \right)=0$.
By the defining property of the Fischer--Burmeister 
function, $\mathcal{R}_{\mathcal C}(V,S)\geq 0$,
$-\mathcal{R}_{\mathcal S}(V,S)\geq 0$,
and $\mathcal{R}_{\mathcal C}(V,S) 
\mathcal{R}_{\mathcal S}(V,S) =0$. Thus the complete 
variational inequality and complementarity
condition holds.

It remains to verify the two boundary conditions.
Since $\mathbb{E}\left[ \mathcal{R}_{\partial\mathcal S,1}
(V,S)^2 \right]=0$, we have
$\mathcal{R}_{\partial\mathcal S,1}(V,S)(x)=0$
for $\widetilde{x}$-almost every $x\in\mathbb{R}^n$.
Because $V,G\in C^2([0,1]\times\mathbb{R}^n,\mathbb{R})$
and $S\in C^1(\mathbb{R}^n,[0,1])$, the mapping
$x\longmapsto \mathcal{R}_{\partial\mathcal S,1}(V,S)
(x)$ is continuous. Since $\widetilde{x}$ is strictly 
positive everywhere, a continuous function that 
vanishes $\widetilde{x}$-almost everywhere
must vanish everywhere. Consequently,
$\mathcal{R}_{\partial\mathcal S,1}(V,S)(x)=0$,
$x\in\mathbb{R}^n$. Hence equation 
\eqref{eq:fb_value_matching} holds.
Finally, $\mathbb{E}\left[ \mathcal{R}_{\partial\mathcal 
S,2}(V,S)^2 \right]=0$ implies
$\mathcal{R}_{\partial\mathcal S,2}(V,S)(x)=0$
for $\widetilde{x}$-almost every $x$.
Again, the assumed regularity implies that
$x\longmapsto \mathcal{R}_{\partial\mathcal S,2}(V,S)(x)$
is continuous. Hence
$\mathcal{R}_{\partial\mathcal S,2}(V,S)(x)=0$,
$x\in\mathbb{R}^n$. Therefore, the smooth-fit condition 
\eqref{eq:fb_smfit} holds.

Combining the above results, we conclude
that $(V,S)$ solves the FBP
\eqref{eq:fb_pde}--\eqref{eq:fb_smfit}.
Therefore, (ii) holds. $\hfill \Box$

\subsection{Detailed Mathematical Formulations of the Numerical Examples ~\ref{exam:bm}--\ref{exam:ou_alpha}}
Due to space limitations in the main text, we provide here the
complete mathematical formulations for the numerical
examples considered in Section~\ref{sec:experiments}.
\subsubsection{Example~ \ref{exam:bm}}

Given the SDEs for the posterior probability and the observation
process under measure $\mathbb{P}_{\pi,\mathbf{x}}$, the corresponding
infinitesimal operator in Section~ \ref{subsec:opt_stop} is
\begin{align}
\mathcal{L}_{\mathrm{bm}}
={}&
\frac{\varrho}{2}\pi^2(1-\pi)^2
\frac{\partial^2}{\partial \pi^2}
+
\pi(1-\pi)\boldsymbol{\delta}^{\prime}
\frac{\partial^2}{\partial \pi \partial \mathbf{x}}
+
\lambda(1-\pi)\frac{\partial}{\partial \pi}
\nonumber\\
&+
\frac{1}{2}
\operatorname{tr}
\left(
\boldsymbol{\Sigma}
\frac{\partial^2}
{\partial \mathbf{x}\partial \mathbf{x}^{\prime}}
\right)
+
\left(
\boldsymbol{\mu}_0+\pi\boldsymbol{\delta}
\right)^{\prime}
\frac{\partial}{\partial \mathbf{x}},
\label{eq:bm_generator_appendix}
\end{align}
where
\[
\varrho
=
\boldsymbol{\delta}^{\prime}
\boldsymbol{\Sigma}^{-1}
\boldsymbol{\delta}.
\]

Then the free-boundary problem(See Equation~\ref{eq:fb_system}) for the sequential change-point
detection problem is expressed as
\begin{align}
\mathcal{L}_{\mathrm{bm}}V(\pi,\mathbf{x})
-rV(\pi,\mathbf{x})+c_0\pi
&=0,
&&
\text{for } \pi\in(0,S(\mathbf{x})),
\label{eq:bm_fbp_a}
\\
\left.
V(\pi,\mathbf{x})
\right|_{\pi=S(\mathbf{x})^-}
&=
1-S(\mathbf{x}),
\label{eq:bm_fbp_b}
\\
V(\pi,\mathbf{x})
&=
1-\pi,
&&
\text{for } \pi\in(S(\mathbf{x}),1),
\label{eq:bm_fbp_c}
\\
V(\pi,\mathbf{x})
&<
1-\pi,
&&
\text{for } \pi\in(0,S(\mathbf{x})).
\label{eq:bm_fbp_d}
\end{align}

The corresponding smooth-fit condition is
\begin{equation}
\left.
\left[
\frac{\partial V}{\partial \pi}
+
\frac{1}
{\pi(1-\pi)\varrho}
\boldsymbol{\delta}^{\prime}
\frac{\partial V}{\partial \mathbf{x}}
\right]
\right|_{\pi=S(\mathbf{x})^-}
=
-1.
\label{eq:bm_smooth_fit_appendix}
\end{equation}

\subsubsection{Example~ \ref{exam:ou_kappa_tri} and Example~ \ref{exam:ou_kappa_toe}}
The infinitesimal operator in Section~\ref{subsec:opt_stop} associated with the posterior probability
and the observation process is
\begin{align}
\mathcal{L}_{\mathrm{ou},1}
={}&
\frac{\varrho_{\mathrm{ou},1}(\mathbf{x})}{2}
\pi^2(1-\pi)^2
\frac{\partial^2}{\partial \pi^2}
+
\pi(1-\pi)
(\boldsymbol{\alpha}-\mathbf{x})^{\prime}
(\boldsymbol{\kappa}_1-\boldsymbol{\kappa}_0)^{\prime}
\frac{\partial^2}{\partial \pi \partial \mathbf{x}}
\nonumber\\
&+
\lambda(1-\pi)
\frac{\partial}{\partial \pi}
+
\frac{1}{2}
\operatorname{tr}
\left(
\boldsymbol{\Sigma}
\frac{\partial^2}
{\partial \mathbf{x}\partial \mathbf{x}^{\prime}}
\right)
\nonumber\\
&+
(\boldsymbol{\alpha}-\mathbf{x})^{\prime}
\left(
\boldsymbol{\kappa}_0
+
\pi(\boldsymbol{\kappa}_1-\boldsymbol{\kappa}_0)
\right)^{\prime}
\frac{\partial}{\partial \mathbf{x}}.
\label{eq:ou_kappa_generator_appendix}
\end{align}

For the sequential change-point detection problem, we solve for
the value function $V(\pi,\mathbf{x})$ and the unknown stopping
boundary $S(\mathbf{x})$ from the following free-boundary problem:
\begin{align}
\mathcal{L}_{\mathrm{ou},1}V(\pi,\mathbf{x})
-rV(\pi,\mathbf{x})+c_0\pi
&=0,
&&
\text{for } \pi\in(0,S(\mathbf{x})),
\label{eq:ou_kappa_fbp_a}
\\
\left.
V(\pi,\mathbf{x})
\right|_{\pi=S(\mathbf{x})^-}
&=
1-S(\mathbf{x}),
\label{eq:ou_kappa_fbp_b}
\\
V(\pi,\mathbf{x})
&=
1-\pi,
&&
\text{for } \pi\in(S(\mathbf{x}),1),
\label{eq:ou_kappa_fbp_c}
\\
V(\pi,\mathbf{x})
&<
1-\pi,
&&
\text{for } \pi\in(0,S(\mathbf{x})).
\label{eq:ou_kappa_fbp_d}
\end{align}

The corresponding smooth-fit condition is
\begin{equation}
\left.
\left[
\frac{\partial V}{\partial \pi}(\pi,\mathbf{x})
+
\frac{1}
{\pi(1-\pi)\varrho_{\mathrm{ou},1}(\mathbf{x})}
(\boldsymbol{\alpha}-\mathbf{x})^{\prime}
(\boldsymbol{\kappa}_1-\boldsymbol{\kappa}_0)^{\prime}
\frac{\partial V}{\partial \mathbf{x}}(\pi,\mathbf{x})
\right]
\right|_{\pi=S(\mathbf{x})^-}
=
-1.
\label{eq:ou_kappa_smooth_fit_appendix}
\end{equation}

\subsubsection{Example~ \ref{exam:ou_alpha}}

The infinitesimal operator in Section~\ref{subsec:opt_stop} associated with the posterior probability
and the observation process is
\begin{align}
\mathcal{L}_{\mathrm{ou},2}
={}&
\frac{\varrho_{\mathrm{ou},2}}{2}
\pi^2(1-\pi)^2
\frac{\partial^2}{\partial \pi^2}
+
\pi(1-\pi)
(\boldsymbol{\alpha}_1-\boldsymbol{\alpha}_0)^{\prime}
\boldsymbol{\kappa}^{\prime}
\frac{\partial^2}{\partial \pi \partial \mathbf{x}}
+
\lambda(1-\pi)
\frac{\partial}{\partial \pi}
\nonumber\\
&+
\frac{1}{2}
\operatorname{tr}
\left(
\boldsymbol{\Sigma}
\frac{\partial^2}
{\partial \mathbf{x}\partial \mathbf{x}^{\prime}}
\right)
+
\left(
\boldsymbol{\alpha}_0
+
\pi(\boldsymbol{\alpha}_1-\boldsymbol{\alpha}_0)
\right)^{\prime}
\boldsymbol{\kappa}^{\prime}
\frac{\partial}{\partial \mathbf{x}}.
\label{eq:ou_alpha_generator_appendix}
\end{align}

Then, for the sequential change-point detection problem, we solve for
the value function $V(\pi,\mathbf{x})$ and the unknown stopping
boundary $S(\mathbf{x})$ from the following free-boundary problem:
\begin{align}
\mathcal{L}_{\mathrm{ou},2}V(\pi,\mathbf{x})
-rV(\pi,\mathbf{x})+c_0\pi
&=0,
&&
\text{for } \pi\in(0,S(\mathbf{x})),
\label{eq:ou_alpha_fbp_a}
\\
\left.
V(\pi,\mathbf{x})
\right|_{\pi=S(\mathbf{x})^-}
&=
1-S(\mathbf{x}),
\label{eq:ou_alpha_fbp_b}
\\
V(\pi,\mathbf{x})
&=
1-\pi,
&&
\text{for } \pi\in(S(\mathbf{x}),1),
\label{eq:ou_alpha_fbp_c}
\\
V(\pi,\mathbf{x})
&<
1-\pi,
&&
\text{for } \pi\in(0,S(\mathbf{x})).
\label{eq:ou_alpha_fbp_d}
\end{align}

The corresponding smooth-fit condition is
\begin{equation}
\left.
\left[
\frac{\partial V}{\partial \pi}(\pi,\mathbf{x})
+
\frac{1}
{\pi(1-\pi)\varrho_{\mathrm{ou},2}}
(\boldsymbol{\alpha}_1-\boldsymbol{\alpha}_0)^{\prime}
\boldsymbol{\kappa}^{\prime}
\frac{\partial V}{\partial \mathbf{x}}(\pi,\mathbf{x})
\right]
\right|_{\pi=S(\mathbf{x})^-}
=
-1.
\label{eq:ou_alpha_smooth_fit_appendix}
\end{equation}

\newpage
\section{Additional Numerical Experiments}

\subsection{Figure of Dual Architecture Neural Network}

\begin{figure}[h] 
\centering
\includegraphics[scale=0.6]{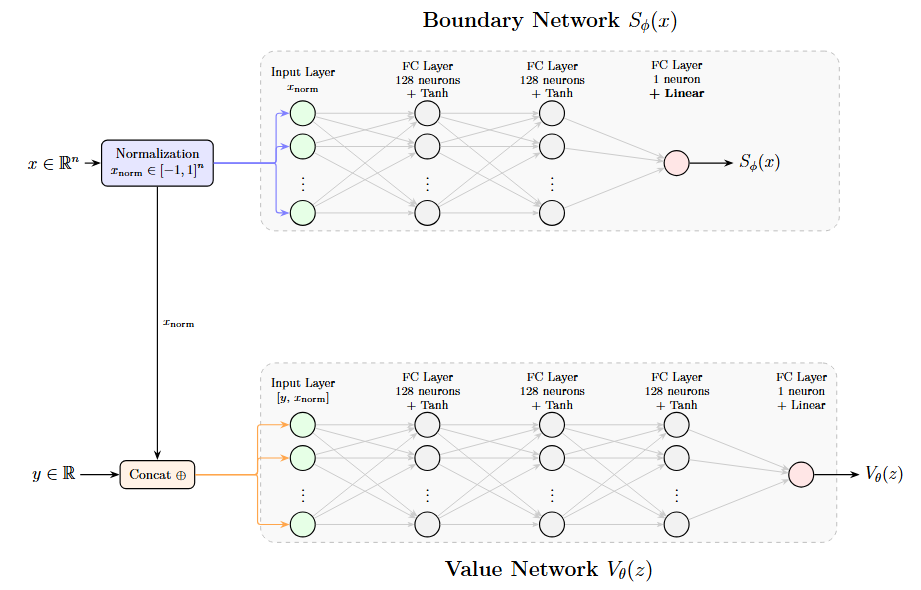}
\caption{Dual architecture neural network used to approximate the value function and stopping boundary.See Section \ref{sec:neural_network_approximation} for details.}
 \label{app:dual_architecture}
\end{figure}

\newpage

\subsection{Figure of More Uncertainty Quantification Results}

We also provide additional uncertainty quantification results for the Wiener process and the OU process in Section \ref{sec:UQ}.

\begin{figure*}[t]
    \centering

    \begin{minipage}[t]{0.48\textwidth}
        \centering
        \includegraphics[width=0.8\linewidth]
        {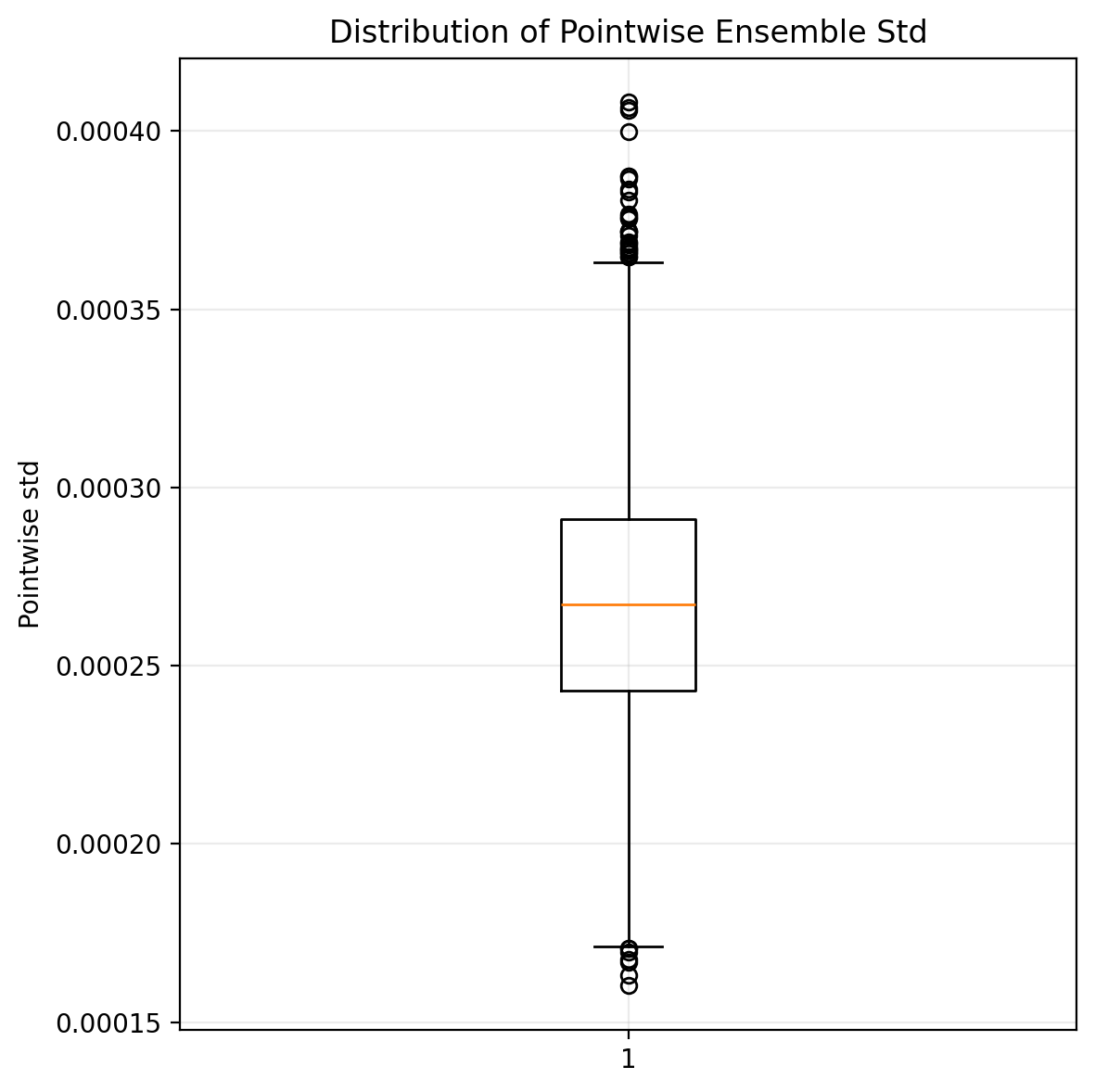}
        \caption{Box plot of the uncertainty quantification results
        for the Wiener process in Example~\ref{exam:bm}.}
        \label{app:WP_UQ_boxplot}
    \end{minipage}
    \hfill
    \begin{minipage}[t]{0.48\textwidth}
        \centering
        \includegraphics[width=0.8\linewidth]
        {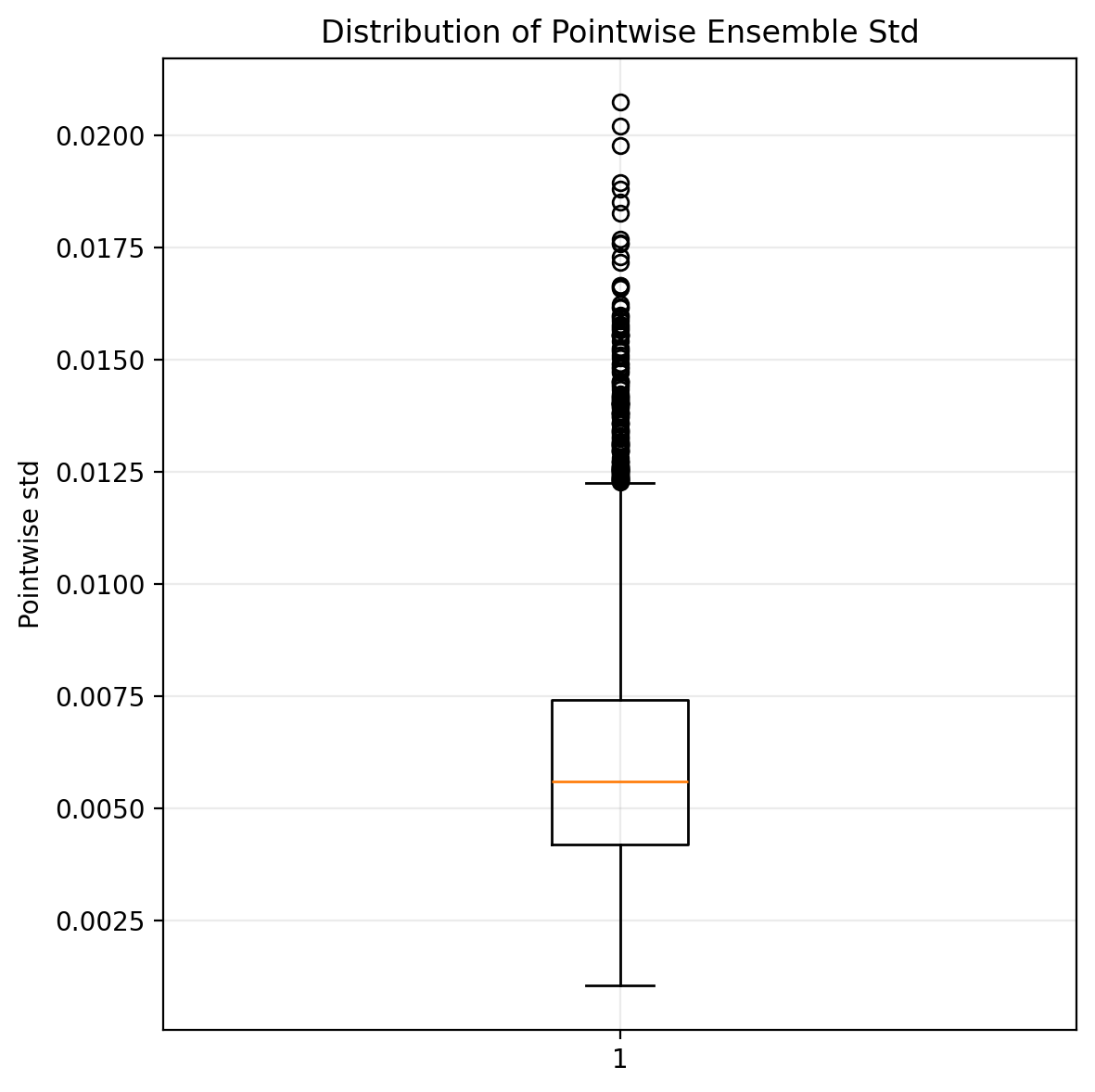}
        \caption{Box plot of the uncertainty quantification results
        for the OU process in Example~\ref{exam:ou_kappa_toe}.}
        \label{app:OU_UQ_boxplot}
    \end{minipage}

\end{figure*}

Figure~\ref{app:WP_UQ_boxplot} presents the distribution of pointwise
ensemble standard deviations of the learned stopping boundary for
the Brownian-motion setting in Example~\ref{exam:bm}. At each evaluation state,
the standard deviation is computed across independently trained
ensemble members, and the box plot summarizes these values over
the evaluation set. The median is approximately
$2.7\times10^{-4}$, with the central $50\%$ of the values lying
between approximately $2.4\times10^{-4}$ and $2.9\times10^{-4}$.
Several upper-tail outliers with the largest values
reaching approximately $4.1\times10^{-4}$. Overall, the small
magnitude of standard deviations indicates consistent
boundary predictions across independent training runs.

Figure~\ref{app:OU_UQ_boxplot} summarizes the pointwise ensemble
standard deviations of the learned stopping boundary for
Example~\ref{exam:ou_kappa_toe}. The median is approximately $5.6\times10^{-3}$,
with the central $50\%$ of values ranging from
$4.2\times10^{-3}$ to $7.4\times10^{-3}$.
The maximum standard deviation is approximately
$2.1\times10^{-2}$.

\begin{table}[htbp]
\caption{Seed-wise summary statistics of absolute boundary deviations
for the OU setting in Example~\ref{exam:ou_kappa_toe}.
$Q_p$ denotes the empirical $p$-quantile across evaluation states.}
\label{tab:ou_uq_abs_deviation}
\centering
\begin{tabular}{@{}rrrrrrrr@{}}
\toprule
Seed & Mean & Median
& $Q_{0.05}$ & $Q_{0.25}$ & $Q_{0.75}$ & $Q_{0.95}$ & Max \\
\midrule
123456
& 0.006884 & 0.005040 & 0.000440 & 0.002290
& 0.009513 & 0.019150 & 0.050168 \\

1234
& 0.005831 & 0.004549 & 0.000377 & 0.002107
& 0.008100 & 0.015849 & 0.030691 \\

12345678
& 0.005821 & 0.004504 & 0.000393 & 0.001954
& 0.008261 & 0.015811 & 0.041734 \\

2023
& 0.005895 & 0.004819 & 0.000448 & 0.002189
& 0.008192 & 0.015134 & 0.050772 \\

2027
& 0.005684 & 0.004560 & 0.000406 & 0.002093
& 0.007916 & 0.014973 & 0.041095 \\

12345
& 0.005566 & 0.004404 & 0.000421 & 0.002094
& 0.007770 & 0.014699 & 0.034935 \\

1234567
& 0.005144 & 0.004039 & 0.000348 & 0.001784
& 0.007237 & 0.013864 & 0.039323 \\

42
& 0.005356 & 0.004321 & 0.000386 & 0.001986
& 0.007525 & 0.013672 & 0.045785 \\

2004
& 0.004713 & 0.003534 & 0.000326 & 0.001647
& 0.006529 & 0.013167 & 0.031337 \\

1520
& 0.004436 & 0.003516 & 0.000295 & 0.001603
& 0.006192 & 0.011807 & 0.029155 \\
\bottomrule
\end{tabular}
\end{table}

Table~\ref{tab:ou_uq_abs_deviation} reports the seed-wise
summary statistics of absolute boundary deviations
for Example~\ref{exam:ou_kappa_toe}.

For the seed-wise analysis, we compute the absolute deviation of
each model's boundary prediction from the mean prediction of the
remaining ensemble members at each evaluation state:
\[
d_m(x_j)
=
\left|
S_m(x_j)
-
\frac{1}{M-1}\sum_{\ell\neq m}S_\ell(x_j)
\right|,
\]
where $M$ is the ensemble size and $x_j$ denotes an evaluation state.
For each seed, these deviations are summarized across evaluation
states by their mean, median, selected quantiles, and maximum,
as reported in Table~\ref{tab:ou_uq_abs_deviation}.The results show consistent boundary predictions across independent
training runs, with mean absolute deviations ranging from approximately
$4.4\times10^{-3}$ to $6.9\times10^{-3}$ across seeds.
For every seed, the 95th percentile remains below $2.0\times10^{-2}$,
supporting the empirical stability of the learned stopping boundary
over the evaluated states.

\subsection{Numerical benchmark with Example~\ref{exam:ou_alpha}}
\label{app:ou_mean_reference}

For an additional benchmark comparison, we consider the parameter
setting of Example~\ref{exam:ou_alpha} with the tridiagonal correlation matrix. Specifically, we use
$\alpha_0=0$, $\alpha_1=\mathbf{1}_n/\sqrt{n}$,
$\kappa=\sigma=I_n$, $\lambda=c_0=1$, and $r=0.05$.
The correlation matrix $R_n$ is defined by
\[
(R_n)_{ij}=
\begin{cases}
1, & i=j,\\
\rho, & |i-j|=1,\\
0, & \text{otherwise},
\end{cases}
\qquad \rho=0.1.
\]
Both the PINN and the numerical reference calculations use this
same tridiagonal configuration.

\paragraph{Reduction to a scalar observation process.}
The drift difference is the constant vector
$\delta_n=\mathbf{1}_n/\sqrt{n}$, and the observation process satisfies
\[
dX_t=
\left(-X_t+\delta_n\mathbf{1}_{\{\theta\leq t\}}\right)dt
+R_n^{1/2}\,dW_t.
\]
Define
\[
q_n=\delta_n^\top R_n^{-1}\delta_n,
\qquad
Y_t=\frac{\delta_n^\top R_n^{-1}X_t}{\sqrt{q_n}}.
\]
A direct calculation yields
\[
dY_t=
\left(-Y_t+\sqrt{q_n}\mathbf{1}_{\{\theta\leq t\}}\right)dt
+dB_t,
\]
where $B_t$ is a standard one-dimensional Brownian motion.
After whitening and rotating the original observations, only this
coordinate has a drift that changes at $\theta$. Conditional on
the observed initial state, the remaining coordinates are independent
OU processes with dynamics that do not depend on $\theta$.
Thus, $Y_t$ retains the change-point information contained in the
full observation process.

The scalar process changes its long-term mean while retaining its
mean-reversion rate. This constant-signal setting admits a
one-dimensional optimal stopping formulation, as discussed by
\citet[Section~3, Case~0]{GloverPeskir2024}.
Here the posterior probability satisfies
\[
d\pi_t=\lambda(1-\pi_t)\,dt
+\sqrt{q_n}\,\pi_t(1-\pi_t)\,d\widetilde{B}_t,
\]
where $\widetilde{B}_t$ is the innovation Brownian motion.
Since the running and stopping costs depend only on $\pi$,
\[
V_n(\pi,x)=u_n(\pi),
\qquad S_n^*(x)\equiv S_n^*.
\]
The boundary is therefore constant in $x$ for each fixed $n$,
although its value depends on $n$ and $R_n$ through $q_n$.

\paragraph{Initial-value reference calculation.}
Using this reduction and retaining the discount rate $r=0.05$
from our problem, we solve
\begin{equation}
\frac{1}{2}q_n\pi^2(1-\pi)^2u_n''(\pi)
+\lambda(1-\pi)u_n'(\pi)-r u_n(\pi)+c_0\pi=0,
\qquad 0<\pi<S_n^*,
\label{eq:ou_mean_reference_ode}
\end{equation}
subject to
\[
u_n(S_n^*)=1-S_n^*,
\qquad u_n'(S_n^*)=-1,
\qquad \lambda u_n'(0)-r u_n(0)=0.
\]
In the stopping region, $u_n(\pi)=1-\pi$.
Our reference calculation thus solves the discounted problem,
rather than directly importing an undiscounted threshold from
the cited literature.

We first use an initial-value problem (IVP) formulation.
The regular continuation solution is represented as a linear
combination of a homogeneous solution and a particular solution.
These solutions are initialized using tenth-order local expansions
at $\pi=10^{-4}$ and integrated using a Radau IIA method
\citep[Section~IV.8]{HairerWanner1996}, with relative and absolute
tolerances of $10^{-12}$ and $10^{-14}$, respectively.
Value matching determines the linear coefficient, while smooth fit
yields a scalar root condition for the stopping threshold.
The reference calculation does not use PINN predictions.
Table~\ref{tab:ou_mean_ivp} reports the resulting thresholds,
denoted by $S_{\mathrm{IVP}}$.
Each row is obtained by solving a one-dimensional problem with
the effective signal strength associated with the original
observation dimension $n$.

\begin{table}[htbp]
\caption{Numerical reference thresholds for the tridiagonal
OU long-term mean change configuration, obtained using the
initial-value formulation.}
\label{tab:ou_mean_ivp}
\centering
\begin{tabular}{@{}rr@{}}
\toprule
$n$ & $S_{\mathrm{IVP}}$ \\
\midrule
1   & 0.569022138974 \\
5   & 0.562006378463 \\
50  & 0.560563307692 \\
100 & 0.560482896107 \\
500 & 0.560418548434 \\
\bottomrule
\end{tabular}
\end{table}

\paragraph{Cross-check using a boundary-value formulation.}
To assess the numerical robustness of the reference calculation,
we additionally solve the same free-boundary problem using a
boundary-value problem (BVP) formulation. The unknown interval is
mapped to $t\in[0,1]$ by
\[
\pi=\varepsilon+(S-\varepsilon)t.
\]
The value function and threshold $S$ are determined simultaneously
using fourth-order collocation with residual control
\citep{KierzenkaShampine2001}.
We take $\varepsilon=0.002$, impose the regularity condition through
a tenth-order local expansion at the left endpoint, and impose
value matching and smooth fit at the right endpoint.
The residual tolerance is $10^{-9}$.
The BVP initialization does not use either the IVP threshold or
PINN predictions.

Table~\ref{tab:ou_mean_reference_check} shows that the two numerical
formulations yield stopping thresholds differing by at most
$3.05\times10^{-13}$ across all five dimensions.
Although both formulations share the same reduced model and
endpoint regularity condition, their agreement provides a numerical
consistency check. We use these mutually consistent thresholds
as numerical benchmarks for the PINN predictions.

\begin{table}[htbp]
\caption{Comparison of the IVP and BVP reference thresholds for
the tridiagonal configuration. Absolute differences are computed
before rounding the displayed thresholds.}
\label{tab:ou_mean_reference_check}
\centering
\begin{tabular}{@{}rrrr@{}}
\toprule
$n$
& $S_{\mathrm{IVP}}$
& $S_{\mathrm{BVP}}$
& $|S_{\mathrm{IVP}}-S_{\mathrm{BVP}}|$ \\
\midrule
1
& 0.569022138974
& 0.569022138973
& $3.0098\times10^{-13}$ \\

5
& 0.562006378463
& 0.562006378463
& $2.5302\times10^{-13}$ \\

50
& 0.560563307692
& 0.560563307692
& $2.6534\times10^{-13}$ \\

100
& 0.560482896107
& 0.560482896107
& $3.0465\times10^{-13}$ \\

500
& 0.560418548434
& 0.560418548433
& $2.8877\times10^{-13}$ \\
\bottomrule
\end{tabular}
\end{table}

\paragraph{Comparison with the PINN predictions.}
\label{benchmark_result}
Let
$
\overline{S}_{\mathrm{PINN}}
$
be the mean boundary prediction by PINN method.
We measure its absolute deviations from the two numerical benchmarks by
\[
E_{\mathrm{IVP},n}
=\left|\overline{S}_{\mathrm{PINN},n}-S_{\mathrm{IVP},n}\right|,
\qquad
E_{\mathrm{BVP},n}
=\left|\overline{S}_{\mathrm{PINN},n}-S_{\mathrm{BVP},n}\right|.
\]

To also assess pointwise discrepancies, we compute
\[
E_{\max,n}
=\max_{1\leq j\leq N_n}
\left|\widehat{S}_n(x_j)-S_{\mathrm{IVP},n}\right|.
\]
Because the reference boundary is constant, this maximum can be
computed directly from the sampled minimum and maximum predictions.

\begin{table}[htbp]
\caption{Comparison of PINN boundary predictions with
the numerical benchmarks}
\label{tab:ou_mean_pinn_benchmark}
\centering
\begin{tabular}{@{}rrrrr@{}}
\toprule
$n$
& $\overline{S}_{\mathrm{PINN}}$
& $E_{\mathrm{IVP}}$
& $E_{\mathrm{BVP}}$
& $E_{\max}$ \\
\midrule
5
& 0.562992
& $9.86\times10^{-4}$
& $9.86\times10^{-4}$
& $2.44\times10^{-3}$ \\

50
& 0.561457
& $8.94\times10^{-4}$
& $8.94\times10^{-4}$
& $2.42\times10^{-3}$ \\

100
& 0.561400
& $9.17\times10^{-4}$
& $9.17\times10^{-4}$
& $1.94\times10^{-3}$ \\

500
& 0.561265
& $8.46\times10^{-4}$
& $8.46\times10^{-4}$
& $2.15\times10^{-3}$ \\
\bottomrule
\end{tabular}
\end{table}

Table~\ref{tab:ou_mean_pinn_benchmark} shows close agreement
between the mean PINN predictions and numerical benchmarks.
The mean predictions are slightly above the reference thresholds,
with absolute deviations ranging from
$8.46\times10^{-4}$ to $9.86\times10^{-4}$.
These deviations remain below $1.0\times10^{-3}$ in all four
tested dimensions. The largest pointwise absolute error across
the sampled evaluation states is
$2.44\times10^{-3}$, remaining below $2.5\times10^{-3}$.
Thus, the benchmark comparison supports the accuracy of the learned
boundaries over the evaluated states.

\subsection{With Dimension = 1,000 in Example~\ref{exam:bm}}
\begin{figure}[H] 
\centering
\includegraphics[width=0.6\linewidth]{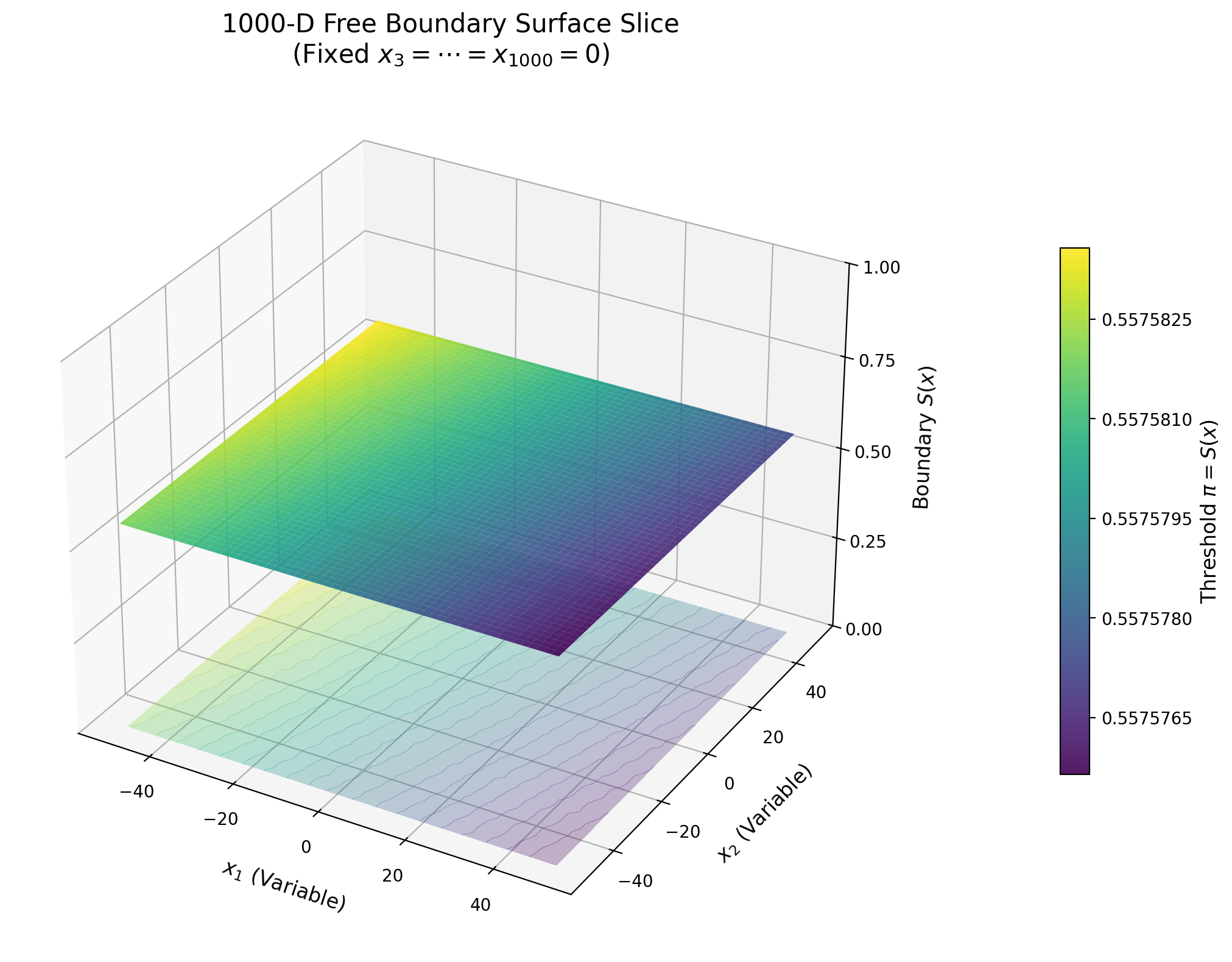}
\caption{With n=1,000 in Example~\ref{exam:bm}}
\label{app:WP_1000_D_boundary}
\end{figure}

Due to the space limitation, we will provide our result of Example~\ref{exam:bm} with dimension $n=1,000$ here.
Figure~\ref{app:WP_1000_D_boundary} shows the learned stopping boundary is approximately $0.557$, 
which is close to the theoretical result of $0.556$. 

\section{Implementation and Reproducibility Details}
\label{app:implementation}

\subsection{Computational resources and runtime}

All numerical experiments were conducted on a single workstation
running Windows 11. The system was equipped with an AMD Ryzen AI 9
HX 370 processor, 64 GB of system memory, and an NVIDIA GeForce
RTX 5090 Laptop GPU. Each training run used a single GPU, without
multi-GPU or distributed training.

The software environment consisted of Python 3.12.11,
PyTorch 2.8.0 with CUDA 12.8 support, and CUDA-enabled GPU execution.
For the experiments with dimensions up to $n=500$, each model was
trained for 50,000 iterations(80,000 for $n=1,000$). A typical training run required
approximately 30--50 minutes, depending on the dimension and the
specific model configuration. For experiments beyond 500 dimensions,
the number of training iterations was increased to at least 80,000.
The reported runtimes are approximate wall-clock times.

\subsection{Training hyperparameters}

The value function and stopping boundary are represented by two
separate fully connected neural networks. The value network
$V_\theta(\pi,x)$ takes the posterior probability $\pi$ together
with the $n$-dimensional state $x$ as input. It consists of three
hidden layers of width 128, each followed by a $\tanh$ activation,
and a scalar output layer. The boundary network $S_\phi(x)$ takes
only the state $x$ as input and consists of two hidden layers of
width 128 with $\tanh$ activations, followed by a scalar output layer.

For both networks, the spatial input is scaled according to
\[
x_{\mathrm{scaled}}
=
\frac{x}{x_{\max}-x_{\min}},
\]
while the posterior probability $\pi$ is used directly as an input
to the value network. 

The two networks are optimized jointly using Adam with separate
learning rates,
\[
\eta_V=10^{-3},
\qquad
\eta_S=2\times10^{-3}.
\]

The training batch size is set to 8,192. This choice reflects the
available computational resources rather than a deliberately optimized
batch size. In high-dimensional problems, larger collocation sets can
provide broader coverage of the computational domain, and the batch
size therefore represents a compromise between sampling coverage and
GPU memory requirements.

For the second-order differential terms in the PDE residual, we use
Hutchinson trace estimation with four probe vectors, following the
high-dimensional PINN formulation of \citet{HuEtAl2024HTE}.
This avoids explicit construction of the full Hessian and reduces the
computational and memory cost of evaluating high-dimensional
second-order operators.

\begin{table}[htbp]
\caption{Main training and implementation parameters used in the
numerical experiments.}
\label{tab:implementation_parameters}
\centering
\begin{tabular}{@{}ll@{}}
\toprule
Setting & Value \\
\midrule
Value network & $3\times128$, Tanh \\
Boundary network & $2\times128$, Tanh \\
Optimizer & Adam \\
Minimum learning rate & $10^{-6}$ \\
Iterations ($n\leq500$) & $50{,}000$ \\
Iterations ($n>500$) & $\geq80{,}000$ \\
Batch size & $8{,}192$ \\
Hutchinson probes & $4$ \\
Weight initialization & Xavier \\
\bottomrule
\end{tabular}
\end{table}

\subsection{Residual-based sampling}

Our adaptive sampling strategy is motivated by the residual-based
active sampling framework of \citet{gao2023active}, where collocation
points with larger residuals are sampled with higher probability.

We furthermore construct a composite residual score
$\eta(\pi,x)$ by combining the residual components appearing
in the loss function with boundary-related residuals specific
to our free-boundary problem.

At each adaptive-sampling step, we first generate 40,000 spatial
candidates using a mixture of local, radius-stratified, and
full-domain Sobol samples. The candidate pool is partitioned into
spatial metric bins. Let $\mathcal B_b$ denote the set of candidate
points in bin $b$, and let $\omega_b$ denote its prescribed sampling
weight. A corresponding quota of adaptive samples is first assigned
to each bin. Within a bin, candidate $i$ is selected according to
\[
p_i^{(b)}
=
\frac{\eta_i+\varepsilon}
{\sum_{j\in\mathcal B_b}(\eta_j+\varepsilon)},
\qquad
i\in\mathcal B_b,
\]
with $\varepsilon=10^{-8}$.
Thus, candidates with larger composite residuals are sampled more
frequently, while the bin-wise allocation prevents a small number
of high-residual far-field regions from dominating the adaptive
batch.

For each spatial candidate, the posterior probability is generated
from a mixture distribution: with probability $0.5$, it is sampled
near the current boundary,
\[
\pi=S_\phi(x)+\xi,
\qquad
\xi\sim\mathcal N(0,0.03^2),
\]
while the remaining samples provide broader posterior coverage.

The final training batch contains 8,192 points and combines
$25\%$ residual-adaptive samples, $25\%$ local samples,
$40\%$ radius-stratified shell samples, and $10\%$ full-domain
Sobol samples. This construction balances residual-driven
exploitation with continued exploration of the computational domain.

\begin{algorithm}[htbp]
\caption{Residual-based adaptive sampling}
\label{alg:residual_sampling}
\begin{algorithmic}[1]

\Require Current networks $V_\theta$, $S_\phi$;
batch size $N$; candidate pool size $N_{\mathrm{cand}}$;
spatial-bin weights $\{\omega_b\}$.

\State Generate a candidate pool
$\{x_i\}_{i=1}^{N_{\mathrm{cand}}}$

\State For each $x_i$, sample the posterior probability $\pi_i$:
with probability $0.5$,
\[
\pi_i=S_\phi(x_i)+\xi_i,
\qquad
\xi_i\sim\mathcal N(0,0.03^2),
\]
and otherwise sample $\pi_i$ from the broader posterior distribution.

\State Evaluate the composite residual
$\eta_i=\eta(\pi_i,x_i)$,
combining complementarity, region-specific,
value-matching, and boundary residual information.

\State Partition the candidate points into spatial metric bins
$\{\mathcal B_b\}$ and allocate an adaptive-sampling quota
to each bin according to $\omega_b$.

\For{each bin $\mathcal B_b$}
    \State Compute residual-based probabilities
    \[
    p_i^{(b)}
    =
    \frac{\eta_i+\varepsilon}
    {\sum_{j\in\mathcal B_b}(\eta_j+\varepsilon)},
    \qquad
    i\in\mathcal B_b.
    \]
    \State Sample the assigned number of adaptive points
    from $\mathcal B_b$ according to $p_i^{(b)}$.
\EndFor

\State Generate additional non-adaptive samples:
$25\%$ local, $40\%$ radius-stratified shell,
and $10\%$ full-domain Sobol samples.

\State Form the final batch using
$25\%$ residual-adaptive samples,
$25\%$ local samples,
$40\%$ shell samples,
and $10\%$ full-domain Sobol samples.

\State Shuffle the batch and return the resulting
$N=8192$ collocation points.

\end{algorithmic}
\end{algorithm}

\bibliography{references}

@article{CohenZhao2015,
  author={Cohen, Kobi and Zhao, Qing},
  title={Asymptotically Optimal Anomaly Detection via Sequential Testing},
  journal={IEEE Transactions on Signal Processing},
  volume={63},
  number={11},
  pages={2929--2941},
  year={2015},
}

@book{Liptser2001,
  author    = {R. S. Liptser and A. N. Shiryaev},
  title     = {Statistics of Random Processes I},
  edition   = {2nd},
  publisher = {Springer},
  year      = {2001},
  address   = {Berlin}
}

@book{Peskir2006,
  title={Optimal Stopping and Free Boundary Problems},
  author={Peskir, Goran and Shiryaev, Albert N},
  year={2006},
  publisher={Birkh{\"a}user}
}

@book{pham2009,
author    = {Huy\^{e}n Pham},
title     = {Continuous-Time Stochastic Control and Optimization with Financial Applications},
year      = {2009},
publisher = {Springer}
}

@article{Bayraktar2005,
  title={The standard Poisson disorder problem revisited},
  author={Bayraktar, Erhan and Dayanik, Savas and Karatzas, Ioannis},
  journal={Stochastic Processes and their Applications},
  volume={115},
  pages={1437-1450},
  year={2005}
}

@inproceedings{gal2016dropout,
  title     = {Dropout as a Bayesian Approximation: Representing Model Uncertainty in Deep Learning},
  author    = {Gal, Yarin and Ghahramani, Zoubin},
  booktitle = {Proceedings of the 33rd International Conference on Machine Learning},
  pages     = {1050--1059},
  year      = {2016},
  volume    = {48},
  series    = {Proceedings of Machine Learning Research}
}

@article{Beibel2000,
  title={A note on sequential detection with exponential penalty for the delay},
  author={Beibel, Michael},
  journal={Annals of Statistics},
  volume={28},
  pages={1696-1701},
  year={2000}
}

@inproceedings{lakshminarayanan2017simple,
  title     = {Simple and Scalable Predictive Uncertainty Estimation Using Deep Ensembles},
  author    = {Lakshminarayanan, Balaji and Pritzel, Alexander and Blundell, Charles},
  booktitle = {Advances in Neural Information Processing Systems},
  volume    = {30},
  pages     = {6402--6413},
  year      = {2017}
}

@article{Gapeev2006,
  title={The Wiener disorder problem with finite horizon},
  author={Gapeev, Pavel V. and Peskir, Goran},
  journal={Stochastic Processes and their Applications},
  volume={116},
  pages={1770-1791},
  year={2006}
}

@article{Gapeev2013,
title={Bayesian quickest detection problems for some diffusion processes},
author={Gapeev, Pavel V. and Shiryaev, Albert N.},
journal={Advances in Applied Probability},
volume={45},
number={1},
pages={164-185},
year={2013},
month={March}
}

@article{Gatta2023,
author = {Gatta, Federico and Schiano Di Cola, Vincenzo and Giampaolo, Fabio and Piccialli, Francesco and Cuomo, Salvatore},
title = {Meshless methods for American option pricing through Physics-Informed Neural Networks},
journal={Engineering Analysis with Boundary Elements},
volume = {151},
year = {2023},
pages = {68-82}
}

@article{Leevy2020,
  author = {Leevy, Joffrey L. and Khoshgoftaar, Taghi M.},
  title = {A survey and analysis of intrusion detection models
           based on {CSE-CIC-IDS2018} Big Data},
  journal = {Journal of Big Data},
  volume = {7},
  pages = {104},
  year = {2020},
  doi = {10.1186/s40537-020-00382-x}
}

@article{Poor1998,
  title={Quickest detection with exponential penalty for delay},
  author={Poor, H. Vincent},
  journal={Annals of Statistics},
  volume={26},
  pages={2179-2205},
  year={1998}
}

@article{Raissi2019,
author = {Raissi, M. and Perdikaris, P. and Karniadakis, G. E.},
title = {Physics-informed neural networks: A deep learning framework for solving forward and inverse problems involving nonlinear partial differential equations},
journal = {Journal of Computational Physics},
volume = {378},
pages = {686-707},
year = {2019}
}

@article{Raissi2020,
title={Hidden fluid mechanics: Learning velocity and pressure fields from flow visualizations},
author={Raissi, Maziar and Yazdani, Alireza and Karniadakis, George Em},
journal={Science},
volume={367},
number={6481},
pages={1026-1030},
year={2020},
publisher={American Association for the Advancement of Science}
}

@article{Shiryaev1963,
author= {A. N. Shiryaev},
title= {On optimum methods in quickest detection problems},
journal= {Theory of Probability and its Applications},
  volume    = {8},
  number    = {1},
  pages     = {22-46},
  year      = {1963}
}

@article{Shiryaev1964,
  title={On Markov sufficient statistics in non-additive Bayes problems of sequential analysis},
  author={Shiryaev, Albert N.},
  journal={Theory of Probability and its Applications},
  volume={9},
  pages={670-686},
  year={1964}
}

@article{Shiryaev1965,
  title={Some exact formulas in a 'disorder' problem},
  author={Shiryaev, Albert N.},
  journal={Theory of Probability and its Applications},
  volume={10},
  pages={348-354},
  year={1965}
}

@article{Shiryaev1967,
  author    = {A. N. Shiryaev},
  title     = {Two problems of sequential analysis},
  journal   = {Cybernetics},
  volume    = {3},
  pages     = {63-69},
  year      = {1967}
}

@inproceedings{tasoulis2018,
  author = {Tasoulis, Sotiris K. and Vrahatis, Aristidis G. and
            Georgakopoulos, Spiros V. and Plagianakos, Vassilis P.},
  title = {Real Time Sentiment Change Detection of {Twitter} Data Streams},
  booktitle = {2018 Innovations in Intelligent Systems and
               Applications (INISTA)},
  pages = {1--6},
  year = {2018},
  publisher = {IEEE},
  doi = {10.1109/INISTA.2018.8466326}
}

@article{Xie2021,
  author = {Xie, Liyan and Zou, Shaofeng and Xie, Yao and
            Veeravalli, Venugopal V.},
  title = {Sequential (Quickest) Change Detection:
           Classical Results and New Directions},
  journal = {IEEE Journal on Selected Areas in Information Theory},
  volume = {2},
  number = {2},
  pages = {494--514},
  year = {2021},
  doi = {10.1109/JSAIT.2021.3072962}
}

@incollection{Kolmogorov1992,
  author    = {Kolmogorov, A. N.},
  title     = {On Analytical Methods in Probability Theory},
  booktitle = {Selected Works of A. N. Kolmogorov, vol. II},
  editor    = {Shiryaev, A. N.},
  year      = {1992},
  pages     = {62-108},
  publisher = {Kluwer},
  address   = {Dordrecht},
}

@article{gao2023active,
  title={Active learning based sampling for high-dimensional nonlinear partial differential equations},
  author={Gao, Wenhan and Wang, Chunmei},
  journal={Journal of Computational Physics},
  volume={475},
  pages={111848},
  year={2023},
  publisher={Elsevier}
}

@article{becker2019deep,
  title={Deep optimal stopping},
  author={Becker, Sebastian and Cheridito, Patrick and Jentzen, Arnulf},
  journal={Journal of Machine Learning Research},
  volume={20},
  number={74},
  pages={1--25},
  year={2019}
}

@article{herrera2024optimal,
  title={Optimal stopping via randomized neural networks},
  author={Herrera, Calypso and Krach, Florian and Ruyssen, Pierre and Teichmann, Josef},
  journal={Frontiers of Mathematical Finance},
  volume={3},
  number={1},
  pages={31--77},
  year={2024}
}

@article{Wang2021Stefan,
  title={Deep learning of free boundary and Stefan problems},
  author={Wang, Sifan and Perdikaris, Paris},
  journal={Journal of Computational Physics},
  volume={428},
  pages={109914},
  year={2021},
  publisher={Elsevier}
}

@article{HuEtAl2024HTE,
  author  = {Hu, Zheyuan and Shi, Zekun and
             Karniadakis, George Em and Kawaguchi, Kenji},
  title   = {Hutchinson Trace Estimation for High-Dimensional
             and High-Order Physics-Informed Neural Networks},
  journal = {Computer Methods in Applied Mechanics and Engineering},
  volume  = {424},
  pages   = {116883},
  year    = {2024},
  doi     = {10.1016/j.cma.2024.116883}
}

@article{GloverPeskir2024,
  author  = {Glover, Kristoffer and Peskir, Goran},
  title   = {Quickest Detection Problems for
             {Ornstein--Uhlenbeck} Processes},
  journal = {Mathematics of Operations Research},
  year    = {2024},
  volume  = {49},
  number  = {2},
  pages   = {1045--1064},
  doi     = {10.1287/moor.2021.0186}
}

@book{HairerWanner1996,
  author    = {Hairer, Ernst and Wanner, Gerhard},
  title     = {Solving Ordinary Differential Equations {II}:
               Stiff and Differential-Algebraic Problems},
  publisher = {Springer},
  year      = {1996},
  edition   = {2},
  series    = {Springer Series in Computational Mathematics},
  volume    = {14},
  doi       = {10.1007/978-3-642-05221-7}
}

@article{KierzenkaShampine2001,
  author  = {Kierzenka, Jacek and Shampine, Lawrence F.},
  title   = {A {BVP} Solver Based on Residual Control
             and the {MATLAB} {PSE}},
  journal = {ACM Transactions on Mathematical Software},
  year    = {2001},
  volume  = {27},
  number  = {3},
  pages   = {299--316},
  doi     = {10.1145/502800.502801}
}

\end{document}